\theoremstyle{plain}
\theoremstyle{plain}
\newtheorem{theorem}{Theorem}[section]
\newtheorem{proposition}[theorem]{Proposition}
\newtheorem{cor}[theorem]{Corollary}
\theoremstyle{definition}
\theoremstyle{remark}
\newtheorem{rem}[theorem]{Remark}
\DeclareMathOperator{\conv}{conv}
\DeclareMathOperator{\rank}{rank}
\title{Nonnegative Tensor Completion \\via Integer Optimization}
\author{
 Caleb Xavier Bugg$^\dagger$\hspace{2.5cm}
 Chen Chen$^\ddagger$\hspace{2.5cm}
 Anil Aswani$^\dagger$
\\
\vspace*{-0.1cm} \\
$^\dagger$University of California, Berkeley, \texttt{\{caleb\_bugg,aaswani\}@berkeley.edu}\\ 
$^\ddagger$The Ohio State University, \texttt{chen.8018@osu.edu}
}
\begin{document}

\maketitle

\begin{abstract}
Unlike matrix completion, tensor completion does not have an algorithm that is known to achieve the information-theoretic sample complexity rate. This paper develops a new algorithm for the special case of completion for nonnegative tensors. We prove that our algorithm converges in a linear (in numerical tolerance) number of oracle steps, while achieving the information-theoretic rate. Our approach is to define a new norm for nonnegative tensors using the gauge of a particular 0-1 polytope; integer linear programming can, in turn, be used to solve linear separation problems over this polytope. We combine this insight with a variant of the Frank-Wolfe algorithm to construct our numerical algorithm, and we demonstrate its effectiveness and scalability through computational experiments using a laptop on tensors with up to one-hundred million entries.
\end{abstract}

\section{Introduction}

Tensors generalize matrices. A tensor $\psi$ of \emph{order} $p$ is $\psi \in \mathbb{R}^{r_1 \times \cdots \times r_p}$, where $r_i$ is the \emph{dimension} of the tensor in the $i$-th index, for $i = 1,\ldots,p$.  Though related, many problems that are polynomial-time solvable for matrices are NP-hard for tensors.  For instance, it is NP-hard to compute the rank of a tensor \citep{hillar2013}. Tensor versions of the spectral norm, nuclear norm, and singular value decomposition are also NP-hard to compute \citep{hillar2013,friedland2014}.

\subsection{Past Approaches to Tensor Completion}

Tensor completion is the problem of observing (possibly with noise) a subset of entries of a tensor and then estimating the remaining entries based on an assumption of low-rankness. The tensor completion problem is encountered in a number of important applications, including computer vision \citep{liu2012tensor,zhang2019robust}, regression with only categorical variables \citep{aswani2016}, healthcare \citep{gandy2011tensor,6174300}, and many other application domains \citep{song2019tensor}.

Although the special case of matrix completion is now well understood, the tensor version of this problem has an unresolved tension. To date, no tensor completion algorithm has been shown to achieve the information-theoretic sample complexity rate. Namely, for a tensor completion problem on a rank $k$ tensor with sample size $n$, the information theoretic rate for estimation error is $\sqrt{k\cdot\sum_{i} r_{i}/n}$ \citep{gandy2011tensor}. In fact, evidence suggests a computational barrier in which no polynomial-time algorithm can achieve this rate \citep{barak2016noisy}. One set of approaches has polynomial-time computation but requires exponentially more samples than the information-theoretic rate \citep{gandy2011tensor,mu2014square,barak2016noisy,montanari2018spectral}, whereas another set of approaches achieves the information-theoretic rate but (in order to attain global minima) requires solving NP-hard problems that lack numerical algorithms \citep{chandrasekaran2012convex,yuan2016tensor,yuan2017incoherent,rauhut2021tensor}. 

However, algorithms that achieve the information-theoretic rate have been developed for a few special cases of tensors.   Completion of nonnegative rank-1 tensors can be written as a convex optimization problem \citep{aswani2016}. For symmetric orthogonal tensors, a variant of the Frank-Wolfe algorithm has been proposed \citep{rao2015}. Though this latter paper does not prove their algorithm achieves the information-theoretic rate, this can be shown using standard techniques \citep{gandy2011tensor}. This latter paper is closely related to the approach we take in this paper, with one of the key differences being the design of a different separation oracle in order to support a different class of tensors.

\subsection{Contributions}

This paper proposes a numerical algorithm for completion of nonnegative tensors that provably converges to a global minimum in a linear (in numerical tolerance) number of oracle steps, while achieving the information-theoretic rate. Nonnegative tensors are encountered in many applications. For instance, image and video data usually consists of nonnegative tensors  \citep{liu2012tensor,zhang2019robust}. Notably, the image demosaicing problem that must be solved by nearly every digital camera \citep{li2008image} is an instance of a nonnegative tensor completion problem, though it has not previously been interpreted as such. Nonnegative tensor completion is also encountered in specific instances of recommender systems \citep{song2019tensor}, healthcare applications  \citep{gandy2011tensor,6174300}, and statistical regression contexts \citep{aswani2016}.

Our approach is to define a new norm for nonnegative tensors using the gauge of a specific 0-1 polytope that we construct. We prove this norm acts as a convex surrogate for rank and has low statistical complexity (as measured by the Rademacher average for the tensor, viewed as a function from a set of indices to the corresponding entry), but is NP-hard to approximate to an arbitrary accuracy. Importantly, we prove that the tensor completion problem using this norm achieves the information-theoretic rate in terms of sample complexity. However, the resulting tensor completion problem is NP-hard to solve. Nonetheless, because our new norm is defined using a 0-1 polytope, we can use integer linear optimization as a practical means to solve linear separation problems over the polytope. We combine this insight with a variant of the Frank-Wolfe algorithm to construct our numerical algorithm, and we demonstrate its effectiveness and scalability through experiments.

\section{Norm for Nonnegative Tensors}
\label{sec: norm}
Consider a tensor $\psi \in \mathbb{R}^{r_1 \times \cdots \times r_p}$. To refer to a specific entry in the tensor we use the notation $\psi_x := \psi_{x_1,\ldots,x_p}$, where $x = (x_1,\ldots,x_p)$, and $x_i \in [r_i]$ denotes the value of the $i$-th index, with $[s] := \{1,\ldots,s\}$. Let $\rho = \sum_i r_i$, $\pi = \prod_{i} r_{i}$, and $\mathcal{R} = [r_1]\times\cdots\times[r_p]$.

A nonnegative rank-1 tensor is $\psi_x = \prod_{k=1}^p\theta^{(k)}_{x_k}$, where $\theta^{(k)} \in \mathbb{R}^{r_k}_+$ are nonnegative vectors indexed by the different values of $x_k \in [r_k]$.  To simplify notation, we drop the superscript in $\theta^{(k)}_{x_k}$ and write this as $\theta_{x_k}$ when clear from the context.  For a nonnegative tensor $\psi$, its nonnegative rank is 
\begin{equation}
\textstyle\rank_+(\psi) = \min \{q\ |\ \psi = \sum_{k=1}^q \psi^k, \psi^k \in\mathcal{B}_\infty\text{ for } k\in[q]\},
\end{equation}
where we define the ball of nonnegative rank-1 tensors whose maximum entry is $\lambda\in\mathbb{R}_+$ to be
\begin{equation}
\label{eq:blam}
\mathcal{B}_\lambda = \{\psi : \psi_x=\textstyle\lambda\cdot\prod_{k=1}^p\theta_{x_k}, \hspace{.1cm}
 \theta_{x_k}\in[0,1], \text{ for }  x\in\mathcal{R}\}
\end{equation}
and $\mathcal{B}_\infty = \lim_{\lambda \rightarrow \infty} B_\lambda$. A nonnegative \textsc{cp} decomposition is given by the  summation $\psi = \textstyle \sum_{k=1}^{\rank_+(\psi)} \psi^k,$ where $\psi^k \in \mathcal{B}_\infty$ for $k \in[\rank_+(\psi)]$.

\subsection{Convex Hull of Nonnegative Rank-1 Tensors}

Now for $\lambda \in \mathbb{R}_+$ consider the finite set of points:
\begin{equation}
\mathcal{S}_\lambda = \{\psi : \psi_x=\textstyle\lambda\cdot\prod_{k=1}^p\theta_{x_k}, \hspace{.1cm} \theta_{x_k}\in\{0,1\} \text{ for }  x\in\mathcal{R}\}.
\end{equation}
Using standard techniques from integer optimization \citep{hansen1979methods,padberg1989boolean}, the above set of points can be rewritten as a set of linear constraints on binary variables: $\mathcal{S}_\lambda = \{\psi : \lambda\cdot(1-p)+\textstyle\lambda\cdot\sum_{k=1}^p \theta_{x_k} \leq \psi_x, 0\leq\psi_x\leq \lambda\cdot\theta_{x_k}, \theta_{x_k} \in \{0,1\}, \text{ for } k\in[p], x\in\mathcal{R}\}.$ Our first result is that the convex hulls of the set of points $\mathcal{S}_\lambda$ and of the ball $\mathcal{B}_\lambda$ are equivalent.

\begin{proposition}
\label{prop:convequiv}
We have the relation that $\mathcal{C}_\lambda := \conv(\mathcal{B}_\lambda) = \conv(\mathcal{S}_\lambda)$.
\end{proposition}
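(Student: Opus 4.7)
The containment $\conv(\mathcal{S}_\lambda) \subseteq \conv(\mathcal{B}_\lambda)$ is immediate since $\mathcal{S}_\lambda \subseteq \mathcal{B}_\lambda$ (every $\{0,1\}$-valued $\theta$ is in particular $[0,1]$-valued). The content of the proposition is therefore the reverse inclusion: every $\psi \in \mathcal{B}_\lambda$ must be written as a convex combination of $0/1$-parameterized rank-one tensors in $\mathcal{S}_\lambda$.

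My plan is a probabilistic rounding argument that uses one independent random threshold per mode. Given $\psi \in \mathcal{B}_\lambda$ with representation $\psi_x = \lambda \prod_{k=1}^p \theta^{(k)}_{x_k}$ where $\theta^{(k)}_{x_k} \in [0,1]$, draw independent uniform $[0,1]$ variables $U_1,\ldots,U_p$ and set $Y^{(k)}_{x_k} = \mathbf{1}[U_k \leq \theta^{(k)}_{x_k}] \in \{0,1\}$. Define the random tensor $\Psi$ by $\Psi_x = \lambda \prod_{k=1}^p Y^{(k)}_{x_k}$. Every realization of $\Psi$ lies in $\mathcal{S}_\lambda$ by construction. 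Because the $U_k$ are mutually independent, the factors $\{Y^{(k)}_{x_k}\}_k$ are independent across modes, so
\begin{equation*}
\mathbb{E}[\Psi_x] \;=\; \lambda\prod_{k=1}^p \mathbb{E}\bigl[Y^{(k)}_{x_k}\bigr] \;=\; \lambda\prod_{k=1}^p \theta^{(k)}_{x_k} \;=\; \psi_x.
\end{equation*}

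Since $\mathcal{S}_\lambda$ is finite, $\conv(\mathcal{S}_\lambda)$ is a polytope equal to the set of all expectations of probability distributions supported on $\mathcal{S}_\lambda$. Hence $\psi = \mathbb{E}[\Psi] \in \conv(\mathcal{S}_\lambda)$, which completes the nontrivial direction. (One could also make this explicit by noting that sorting $\theta^{(k)}$ partitions $[0,1]$ into at most $r_k+1$ intervals, so $\Psi$ takes only finitely many values and yields an explicit finite convex combination.)

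The only delicate point is the choice of coupling. Using one shared $U_k$ across the index $x_k$ within mode $k$ keeps the construction honest (the columns of $Y^{(k)}$ come from a single threshold), while independence \emph{across} modes is what forces the expectation of the product to factor into the product of expectations and match $\psi_x$. Any other coupling, such as independent rounding for each $(k,x_k)$ pair or a single global uniform, would either still work but be overkill or would break the required factorization. I expect this coupling insight to be the only conceptual step; everything else is bookkeeping.
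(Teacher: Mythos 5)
Your proof is correct, but it takes a genuinely different route from the paper's. The paper proves the nontrivial inclusion $\conv(\mathcal{B}_\lambda)\subseteq\conv(\mathcal{S}_\lambda)$ by contradiction: if some $\psi'\in\mathcal{B}_\lambda$ lay outside $\conv(\mathcal{S}_\lambda)$, the hyperplane separation theorem gives a linear functional $\varphi$ strictly separating $\psi'$ from $\conv(\mathcal{S}_\lambda)$, and then a cited result on multilinear optimization (Proposition 2.1 of Drenick, 1992) --- that the maximum of $\langle\varphi,\cdot\rangle$ over $\mathcal{B}_\lambda$ is attained at a point of $\mathcal{S}_\lambda$ --- yields a contradiction. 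Your argument is instead direct and constructive: per-mode threshold rounding realizes any $\psi\in\mathcal{B}_\lambda$ as the expectation of a finitely supported random element of $\mathcal{S}_\lambda$, hence as an explicit convex combination. What each buys: the paper's route foreshadows the algorithmic core of the rest of the paper, since "linear optimization over $\mathcal{B}_\lambda$ is solved at binary points" is precisely the fact that justifies the integer-programming separation oracle later on; your route is self-contained (no separation theorem, no external multilinear-optimization result), and via the sorting observation it even bounds the support size of the convex combination by $\prod_{k}(r_k+1)$. One small correction to your closing discussion: sharing a single $U_k$ within mode $k$ is not actually needed --- independent thresholds for each pair $(k,x_k)$ work equally well, because for any fixed $x$ the $p$ factors $Y^{(k)}_{x_k}$ still depend on distinct independent uniforms, so the expectation still factors; the only coupling that genuinely breaks the argument is one shared across modes, as you correctly note.
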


\begin{proof}
We prove this by showing the two set inclusions $\conv(\mathcal{S}_\lambda) \subseteq \conv(\mathcal{B}_\lambda)$ and $\conv(\mathcal{B}_\lambda) \subseteq \conv(\mathcal{S}_\lambda)$. The first inclusion is immediate since by definition we have $\mathcal{S}_\lambda \subset \mathcal{B}_\lambda$, and so we focus on proving the second inclusion. We prove this by contradiction:  Suppose $\conv(B_\lambda) \not\subset \conv(\mathcal{S}_\lambda)$. Then there exists a tensor $\psi' \in B_\lambda$ with $\psi' \not\in \conv(\mathcal{S}_\lambda)$. By the hyperplane separation theorem, there exists $\varphi \in \mathbb{R}^{r_1\times\cdots\times r_p}$ and $\delta > 0$ such that $\langle \varphi, \psi' \rangle \geq \langle \varphi, \psi\rangle + \delta$ for all $\psi \in \conv(\mathcal{S}_\lambda)$, where $\langle\cdot,\cdot\rangle$ is the usual inner product that is defined as the summation of elementwise multiplication. Now consider the multilinear optimization problem
\begin{equation}
\label{eqn:mop}
\begin{aligned}
\max\ & \langle \varphi, \psi \rangle\\
\mathrm{s.t.}\ & \psi_x=\textstyle\lambda\cdot\prod_{k=1}^p\theta_{x_k}, &\text{ for } x \in \mathcal{R}\\
& \theta_{x_k}\in[0,1], &\text{ for }  x\in\mathcal{R}
\end{aligned}
\end{equation}
Proposition 2.1 of \citep{drenick1992multilinear} shows there exists a global optimum $\psi''$ of (\ref{eqn:mop}) with $\psi''\in \mathcal{S}_\lambda$. By construction, we must have $\langle \varphi, \psi'' \rangle \geq \langle \varphi, \psi' \rangle$, which implies $\langle \varphi, \psi'' \rangle \geq \langle \varphi, \psi\rangle + \delta$ for all $\psi \in \conv(\mathcal{S}_\lambda)$. But this last statement is a contradiction since $\psi'' \in \mathcal{S}_\lambda \subseteq\conv(\mathcal{S}_\lambda)$.
\end{proof}

\begin{rem}
This result has two useful implications. First, $\mathcal{C}_\lambda$ is a polytope since it is the convex hull of a finite number of bounded points. Second, the elements of $\mathcal{S}_\lambda$ are the vertices of $\mathcal{C}_\lambda$, since any individual element cannot be written as a convex combination of the other elements.
\end{rem}

We will call the set $\mathcal{C}_\lambda$ the nonnegative tensor polytope. A useful observation is that the following relationships hold: $\mathcal{B}_\lambda = \lambda\mathcal{B}_1$, $\mathcal{S}_\lambda = \lambda\mathcal{S}_1$, and $\mathcal{C}_\lambda = \lambda\mathcal{C}_1$.

\subsection{Constructing a Norm for Nonnegative Tensors}

Since the set of nonnegative tensors forms a cone \citep{qi2014}, we need to use a modified definition of a norm. A norm on a cone $\mathcal{K}$ is a function $p : \mathcal{K} \rightarrow \mathbb{R}_+$ such that for all $x,y\in\mathcal{K}$ the function has the following three properties: $p(x) = 0$ if and only if $x = 0$; $p(\gamma\cdot x) = \gamma\cdot p(x)$ for $\gamma\in\mathbb{R}_+$; and $p(x+y) \leq p(x)+p(y)$. The difference with the usual norm definition is subtle: The second property here is required to hold for $\gamma\in\mathbb{R}_+$, whereas in the usual norm definition we require $p(\gamma\cdot x) = |\gamma|\cdot p(x)$ for all $\gamma \in \mathbb{R}$.

We next use $\mathcal{C}_\lambda$ to construct a new norm for nonnegative tensors using a gauge (or Minkowski functional) construction. Though constructing norms using a gauge is common in machine learning \citep{chandrasekaran2012convex}, the convex sets used in these constructions are symmetric about the origin. Symmetry guarantees that scaling the ball eventually includes the entire space \citep{rockafellar2009}. However, in our case $\mathcal{C}_\lambda$ is not symmetric about the origin, and so without proof we do not \emph{a priori} know whether scaling $\mathcal{C}_1$ eventually includes the entire space of nonnegative tensors. Thus we have to explicitly prove the gauge is a norm. 

\begin{proposition}
\label{prop:well-posed}
The function defined as
\begin{equation}
\|\psi\|_+ := \inf \{\lambda \geq 0\ |\ \psi \in \lambda\mathcal{C}_1\}
\end{equation}
is a norm for nonnegative tensors $\psi \in \mathbb{R}^{r_1\times\cdots\times r_p}_+$.
\end{proposition}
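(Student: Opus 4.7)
The plan is to verify the three properties in the definition of a norm on a cone, but the first thing to establish is that the infimum defining $\|\psi\|_+$ is finite (i.e., the gauge is well-defined) for every nonnegative tensor $\psi$. This is the crux, because, as the authors note, $\mathcal{C}_1$ is not symmetric about the origin, so it is not automatic that dilations of $\mathcal{C}_1$ fill the nonnegative cone.

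To handle well-definedness I would first observe that every standard basis tensor $e_y$ (with $(e_y)_y = 1$ and $(e_y)_x = 0$ for $x\neq y$) lies in $\mathcal{S}_1$: simply pick $\theta^{(k)}_{y_k}=1$ and $\theta^{(k)}_{x_k}=0$ for $x_k\neq y_k$, which gives $\prod_k \theta^{(k)}_{x_k} = \mathbf{1}[x=y]$. Hence $e_y\in\mathcal{S}_1\subseteq\mathcal{C}_1$ for every $y\in\mathcal{R}$. Given any nonnegative $\psi$, set $\lambda = \sum_{y\in\mathcal{R}}\psi_y$; if $\lambda = 0$ then $\psi = 0\in 0\cdot\mathcal{C}_1$ (noting $0\in\mathcal{S}_1$ by taking all $\theta = 0$), and otherwise $\psi/\lambda = \sum_y (\psi_y/\lambda)\, e_y$ is a convex combination of elements of $\mathcal{S}_1$, so $\psi\in\lambda\mathcal{C}_1$ by \Cref{prop:convequiv}. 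Thus $\|\psi\|_+\leq \sum_y\psi_y<\infty$.

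Next I would verify the three axioms. For definiteness, $\psi=0$ clearly gives $\|\psi\|_+=0$; conversely, since $\mathcal{C}_1$ is a bounded polytope (with entrywise $\ell_\infty$ bound $1$), if $\psi\in\lambda\mathcal{C}_1$ then $\max_x\psi_x\leq\lambda$, so $\|\psi\|_+=0$ forces $\psi=0$. For positive homogeneity, for $\gamma>0$ the substitution $\mu=\lambda/\gamma$ gives
\begin{equation*}
\|\gamma\psi\|_+ = \inf\{\lambda\geq 0 : \gamma\psi\in\lambda\mathcal{C}_1\} = \inf\{\gamma\mu\geq 0 : \psi\in\mu\mathcal{C}_1\} = \gamma\|\psi\|_+,
\end{equation*}
and the case $\gamma=0$ is trivial. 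For subadditivity, take $\lambda_1,\lambda_2$ with $\psi\in\lambda_1\mathcal{C}_1$ and $\phi\in\lambda_2\mathcal{C}_1$; writing $\psi=\lambda_1 c_1$ and $\phi=\lambda_2 c_2$ with $c_i\in\mathcal{C}_1$, convexity of $\mathcal{C}_1$ yields $(\lambda_1 c_1+\lambda_2 c_2)/(\lambda_1+\lambda_2)\in\mathcal{C}_1$, hence $\psi+\phi\in(\lambda_1+\lambda_2)\mathcal{C}_1$; taking infima gives $\|\psi+\phi\|_+\leq\|\psi\|_++\|\phi\|_+$.

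The main obstacle is the well-definedness step, since without the symmetry-about-the-origin structure that is standard in gauge constructions one must exhibit an explicit covering of the nonnegative cone by dilates of $\mathcal{C}_1$. The construction above via the basis tensors $e_y\in\mathcal{S}_1$ resolves this cleanly and, as a byproduct, gives the concrete upper bound $\|\psi\|_+\leq\sum_x\psi_x$. The remaining norm axioms then follow from the standard gauge argument using convexity and boundedness of $\mathcal{C}_1$.
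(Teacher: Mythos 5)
Your proposal is correct and takes essentially the same route as the paper: the crucial finiteness step uses the same idea of decomposing $\psi$ into scaled single-entry tensors lying in $\mathcal{S}_1\subseteq\mathcal{C}_1$ (your bound $\|\psi\|_+\le\sum_x\psi_x$ is in fact slightly sharper than the paper's $\pi\|\psi\|_{\max}$). The three norm axioms are then the standard gauge properties, which the paper obtains by citing Example 3.50 of Rockafellar and Wets while you verify them directly from convexity and boundedness of $\mathcal{C}_1$ --- a cosmetic rather than substantive difference.
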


\begin{proof}
We first prove the above function is finite. Consider any  nonnegative tensor $\psi \in \mathbb{R}^{r_1\times\cdots r_p}_+$, and note there exists a decomposition $\psi = \sum_{i=1}^\pi\psi^k$ with $\psi^k\in\|\psi\|_{\max}\cdot\mathcal{B}_1$  \citep{qi2016uniqueness}. (This holds because we can choose each $\psi^k$ to have a single non-zero value that corresponds to a different entry of $\psi$.) Hence by Proposition \ref{prop:convequiv}, $\psi^k\in\|\psi\|_{\max}\cdot\mathcal{C}_1$. Recalling the decomposition of $\psi$, this means $\psi \in \pi\|\psi\|_{\max}\cdot\mathcal{C}_1$ which by definition means $\|\psi\|_+ \leq \pi\|\psi\|_{\max}$. Thus $\|\psi\|_+$ must be finite.

Next we verify that the three properties of a norm on a cone are satisfied. To do so, we first observe  that by definition: $\mathcal{C}_1$ is convex; $0 \in \mathcal{C}_1$; and $\mathcal{C}_1$ is closed and bounded. Thus by Example 3.50 of \citep{rockafellar2009} we have $\{\psi : \|\psi\|_+ = 0\} = \{0\}$, which means the first norm property holds. Also by Example 3.50 of \citep{rockafellar2009} we have $\lambda\mathcal{C}_1 \subseteq \lambda'\mathcal{C}_1$ for all $0 \leq \lambda \leq \lambda'$, which means the second norm property holds. Last, note Example 3.50 of \citep{rockafellar2009} establishes sublinearity of the gauge, and so the third norm property holds.
\end{proof}

\subsection{Convex Surrogate for Nonnegative Tensor Rank}

An important property of our norm $\|\psi\|_+$ is that it can be used as a convex surrogate for tensor rank, whereas the max and Frobenius norms cannot.  

\begin{proposition}
\label{prop:ranks}
If $\psi$ is a nonnegative tensor, then we have $\|\psi\|_{\max} \leq \|\psi\|_+ \leq \rank_+(\psi)\cdot\|\psi\|_{\max}$. If $\rank_+(\psi) = 1$, then $\|\psi\|_+ = \|\psi\|_{\max}$.
\end{proposition}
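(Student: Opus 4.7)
The plan is to handle the two inequalities separately and then observe that the rank-$1$ case follows by specializing the upper bound.

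For the lower bound $\|\psi\|_{\max} \leq \|\psi\|_+$, I would argue that any $\psi \in \lambda \mathcal{C}_1 = \mathcal{C}_\lambda$ satisfies $\|\psi\|_{\max} \leq \lambda$. By Proposition \ref{prop:convequiv}, $\psi$ can be written as a convex combination of elements in $\mathcal{S}_\lambda$. Each element of $\mathcal{S}_\lambda$ has entries in $\{0,\lambda\}$, so each coordinate of $\psi$ is a convex combination of values in $[0,\lambda]$, hence lies in $[0,\lambda]$. Taking the infimum over admissible $\lambda$ gives $\|\psi\|_{\max} \leq \|\psi\|_+$.

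For the upper bound $\|\psi\|_+ \leq \rank_+(\psi)\cdot \|\psi\|_{\max}$, let $q = \rank_+(\psi)$ and fix a nonnegative \textsc{cp} decomposition $\psi = \sum_{k=1}^q \psi^k$ with each $\psi^k \in \mathcal{B}_\infty$ a nonnegative rank-$1$ tensor. The crucial observation — which is really the only substantive step — is that since every $\psi^k$ has nonnegative entries and they sum to $\psi$, each entry of each summand is bounded above by the corresponding entry of $\psi$, so $\|\psi^k\|_{\max} \leq \|\psi\|_{\max}$ for all $k$. Consequently each $\psi^k \in \|\psi\|_{\max}\cdot \mathcal{B}_1 \subseteq \|\psi\|_{\max}\cdot \mathcal{C}_1$ by Proposition \ref{prop:convequiv}. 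Writing $\psi = q\cdot \bigl(\tfrac{1}{q}\sum_{k=1}^q \psi^k\bigr)$, the inner average lies in $\|\psi\|_{\max}\cdot\mathcal{C}_1$ by convexity of $\mathcal{C}_1$, so $\psi \in q\|\psi\|_{\max}\cdot \mathcal{C}_1$, yielding $\|\psi\|_+ \leq q\|\psi\|_{\max}$ from the definition of the gauge.

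Finally, when $\rank_+(\psi)=1$, the upper-bound argument specializes to give $\|\psi\|_+ \leq \|\psi\|_{\max}$, and combining with the already-established lower bound yields equality. The main obstacle, such as it is, lies in the upper bound — specifically, justifying that a \textsc{cp} decomposition has summands bounded entrywise by $\psi$; this uses nonnegativity in an essential way and is exactly where the nonnegative setting buys us something that a general tensor rank decomposition would not.
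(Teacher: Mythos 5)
Your proof is correct and follows essentially the same route as the paper's: the lower bound via the entrywise bound $\|\cdot\|_{\max}\leq\lambda$ on convex combinations of $\mathcal{S}_\lambda$, and the upper bound via a nonnegative \textsc{cp} decomposition together with the key observation that nonnegativity forces $\|\psi^k\|_{\max} \leq \|\psi\|_{\max}$ for each summand. The only cosmetic difference is that you certify $\psi \in \rank_+(\psi)\cdot\|\psi\|_{\max}\cdot\mathcal{C}_1$ directly by convexity (averaging the summands), whereas the paper invokes the triangle inequality of $\|\cdot\|_+$ combined with the rank-1 equality; these two steps are interchangeable.
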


\begin{proof}
Consider any $\lambda \geq 0$. If $\psi \in\mathcal{S}_\lambda$, then by definition $\|\psi\|_{\max} = \lambda$. By the convexity of norms we have that: if $\psi \in \mathcal{C}_\lambda$, then $\|\psi\|_{\max} \leq \lambda$. This means that for all $\lambda \geq 0$ we have $\mathcal{C}_\lambda \subseteq \mathcal{U}_{\lambda} := \{\psi : \|\psi\|_{\max} \leq \lambda\}$ . Thus we have the relation $\inf \{\lambda \geq 0\ |\ \psi \in \lambda\mathcal{U}_1\} \leq \inf \{\lambda \geq 0\ |\ \psi \in \lambda\mathcal{C}_1\}$. But the left side is $\|\psi\|_{\max}$ and the right side is $\|\psi\|_+$. This proves the left side of the inequality.

Next consider the case where $\rank_+(\psi) = 1$. By definition we have $\psi\in\|\psi\|_{\max}\cdot\mathcal{B}_{1}$. Thus $\psi\in\|\psi\|_{\max}\cdot\mathcal{C}_{1}$, and so $\|\psi\|_+ \leq \|\psi\|_{\max}$. This proves the rank-1 case.

Last we prove the right side of the desired inequality. Consider any nonnegative tensor $\psi$. Recall its nonnegative \textsc{cp} decomposition $\psi = \textstyle \sum_{k=1}^{\rank_+(\psi)} \psi^k$ with $\psi^k\in\mathcal{B}_\infty$. The triangle inequality gives
\begin{equation}
\label{eqn:propeqnproof}
\|\psi\|_+\leq \textstyle\sum_{k=1}^{\rank_+(\psi)}\|\psi^k\|_+ = \textstyle\sum_{k=1}^{\rank_+(\psi)}\|\psi^k\|_{\max},
\end{equation}
where the equality follows from the above-proved rank-1 case since $\rank_+(\psi^k) = 1$ by definition of the \textsc{cp} decomposition. But $\|\psi^k\|_{\max} \leq \|\psi\|_{\max}$ since the tensors are nonnegative. 
\end{proof}

\begin{rem}
These bounds are tight. The lower and upper bounds are achieved by all nonnegative rank-1 tensors. The identity matrix with $k$ columns achieves the upper bound with $\rank_+(\psi) = k$.
\end{rem}

\section{Measures of Norm Complexity}

\subsection{Computational Complexity}

We next characterize the computational complexity of our new norm on nonnegative tensors. 

\begin{proposition}
\label{prop:nphard}
It is \textsc{NP}-hard to approximate the nonnegative tensor norm $\|\cdot\|_+$ to arbitrary accuracy.
\end{proposition}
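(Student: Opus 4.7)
The plan is to reduce from a known NP-hard combinatorial problem via polar duality: linear optimization over the polytope $\mathcal{C}_1$ corresponds to a multilinear Boolean maximization, and this is polynomial-time equivalent to evaluating the gauge $\|\cdot\|_+$ itself.

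First I would identify the support function of $\mathcal{C}_1$ in closed form. Combining Proposition~\ref{prop:convequiv} (together with the remark that $\mathcal{S}_1$ is exactly the vertex set of $\mathcal{C}_1$) with the standard fact that a linear function on a polytope attains its maximum at a vertex, we get
\[
h_{\mathcal{C}_1}(\varphi):=\sup_{\psi\in\mathcal{C}_1}\langle\varphi,\psi\rangle=\max_{\theta\in\{0,1\}^{\rho}}\sum_{x\in\mathcal{R}}\varphi_x\prod_{k=1}^{p}\theta_{x_k}.
\]
The right-hand side is a multilinear $\{0,1\}$-maximization. For a suitably chosen sign-indefinite $\varphi$ (for $p=2$, of the form $\varphi_{ij}=A_{ij}-M(1-A_{ij})$ built from the biadjacency matrix $A$ of a bipartite graph together with a large penalty $M$) it captures the maximum edge biclique problem, which is NP-hard and hard to approximate within any constant factor; for $p\geq 3$ the same gadget embeds the $p=2$ case in two of the $p$ modes. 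Because the optimum value is an integer bounded by a polynomial in the input, approximating $h_{\mathcal{C}_1}$ to within factor $1+1/\mathrm{poly}(n)$ is NP-hard.

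Next I would transfer this hardness to the gauge. Since $\|\psi\|_+\leq\lambda\iff\psi\in\lambda\mathcal{C}_1$, an arbitrarily accurate polynomial-time approximation of $\|\cdot\|_+$ immediately yields (by binary search on $\lambda$) a polynomial-time weak membership oracle for $\mathcal{C}_1$. The Grötschel--Lovász--Schrijver polynomial-time equivalence between weak membership, weak separation, and weak optimization over convex bodies then produces a polynomial-time approximate evaluator for $h_{\mathcal{C}_1}$ to any desired relative precision, contradicting the previous step and thereby implying the proposition.

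The main obstacle will be the GLS transfer. The standard statements require a full-dimensional convex body containing a known ball around the origin, whereas $\mathcal{C}_1$ lies in the nonnegative orthant with $0$ actually a vertex on its boundary. A careful adaptation is needed---either by restricting to the affine hull of $\mathcal{C}_1$ and working with a point in its relative interior, by a coordinate translation that produces a well-conditioned body, or by instead invoking LP duality directly (applying the ellipsoid method to the primal/dual pair $\min\{\lambda:\psi\in\lambda\mathcal{C}_1\}$ and $\max\{\langle\varphi,\psi\rangle:h_{\mathcal{C}_1}(\varphi)\leq 1\}$, whose separation oracle for the dual constraint set is exactly the evaluation of $h_{\mathcal{C}_1}$). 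One must also track how the approximation factors compose through the chain of reductions so that the hardness-of-approximation threshold for the underlying biclique problem survives to the end.
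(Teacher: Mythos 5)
Your plan follows the same skeleton as the paper's proof---show that linear optimization over $\mathcal{C}_1$ (equivalently, evaluating the dual norm, i.e., the support function of the unit ball) is NP-hard, then transfer that hardness to the primal gauge via polarity---but you instantiate both ingredients differently. For the source of hardness, the paper reduces \textsc{max cut} to $\sup\{\langle\varphi,\psi\rangle : \psi\in\mathcal{S}_1\}$ using the appendix of \citep{witsenhausen1986simple}, and then, exactly as you do, uses linearity of the objective together with Proposition~\ref{prop:convequiv} to pass from $\mathcal{S}_1$ to $\mathcal{C}_1$; you instead reduce maximum edge biclique via the penalty gadget $\varphi_{ij}=A_{ij}-M(1-A_{ij})$, which is an equally valid starting point. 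One caveat: your aside that biclique is ``hard to approximate within any constant factor'' is not known under $\mathsf{P}\neq\mathsf{NP}$ alone (only under stronger hypotheses), but your argument never needs it, since exact NP-hardness of biclique plus integrality of the gadget's optimum already makes arbitrary-accuracy approximation NP-hard, which is all the proposition asserts. For the transfer step, the paper simply cites Theorems 3 and 10 of \citep{friedland2016computational}, which package precisely the reduction ``approximate dual-norm evaluation reduces to approximate norm evaluation''; you propose to re-derive this by hand through the Gr\"otschel--Lov\'asz--Schrijver equivalences. The obstacle you flag there is genuine but surmountable: $\mathcal{C}_1$ is in fact full-dimensional, since it contains the standard simplex spanned by $0$ and the $\pi$ tensors having a single unit entry (all of which lie in $\mathcal{S}_1$), so it contains a ball of radius $1/\mathrm{poly}(\pi)$ around the explicitly known point with all entries equal to $1/(2\pi)$ and is contained in a ball of radius $\sqrt{\pi}$; this is exactly the well-bounded, centered situation GLS requires, and driving the weak-optimization error below $1/2$ lets integrality finish the contradiction. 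The trade-off between the two routes: the paper's citation is shorter and hides the conditioning and error-composition issues inside Friedland--Lim (whose proofs rest on the same GLS machinery), whereas your route is self-contained but obliges you to carry out the bookkeeping you correctly identify as the remaining work.
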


\begin{proof}
The dual norm is  $\|\varphi\|_{\circ} = \sup\{\langle \varphi, \psi\rangle\ |\ \|\psi\|_+ \leq 1\}$ for all tensors $\varphi\in\mathbb{R}^{r_1\times\cdots\times r_p}$. Theorems 3 and 10 of \citep{friedland2016computational} show that approximation of the dual norm $\|\cdot\|_{\circ}$ is polynomial-time reducible to approximation of the norm $\|\cdot\|_+$. We proceed by showing a polynomial-time reduction to approximation of $\|\cdot\|_{\circ}$. Without loss of generality assume $p=2$ and $d := r_1 = r_2$. The appendix of \citep{witsenhausen1986simple} proves that \textsc{max cut} is polynomial-time reducible to $\sup\{\langle \varphi, \psi\rangle\ |\ \psi \in \mathcal{S}_1\}$. However, since the objective function is linear and the set $\mathcal{S}_1$ consists of the vertices of $\mathcal{C}_1$, this means that optimization problem is equivalent to $\sup\{\langle \varphi, \psi\rangle\ |\ \psi \in \mathcal{C}_1\}$. This is the desired polynomial-time reduction of \textsc{max cut} to approximation of the dual norm $\|\cdot\|_{\circ}$ because $\mathcal{C}_1 = \{\psi : \|\psi\|_+ \leq 1\}$. The result now follows by recalling that approximately solving \textsc{max cut} to arbitrary accuracy is NP-hard \citep{papadimitriou1991optimization,arora1998proof}.
\end{proof}

\begin{cor}
\label{cor:npcnorm}
Given $K \in\mathbb{R}_+$ and $\psi \in \mathbb{R}^{r_1\times\cdots\times r_p}_+$, it is \textsc{NP}-complete to determine if $\|\psi\|_+ \leq K$.
\end{cor}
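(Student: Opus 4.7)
The plan is to establish NP-completeness via two separate arguments: membership in NP through a Carathéodory-based short-certificate construction, and NP-hardness through a binary-search reduction to Proposition \ref{prop:nphard}.

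For the NP-membership half, I would combine Propositions \ref{prop:convequiv} and \ref{prop:well-posed} to rewrite the condition $\|\psi\|_+ \leq K$ as $\psi \in \mathcal{C}_K = \conv(\mathcal{S}_K)$. Since the ambient tensor space has dimension $\pi$, Carathéodory's theorem would then guarantee that every point in $\conv(\mathcal{S}_K)$ admits a representation as a convex combination of at most $\pi + 1$ vertices of $\mathcal{S}_K$. Each such vertex is determined by a binary vector $\theta \in \{0,1\}^\rho$, so the structural part of the certificate requires only $O(\pi \rho)$ bits. Once the supporting vertices are fixed, the coefficient vector solves a rational linear system whose entries have polynomial bit size, so the coefficients themselves have polynomially bounded bit size by classical Cramer-style determinant bounds. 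Verification would consist of reconstructing each vertex tensor from its binary vector, forming the weighted sum, and comparing elementwise against $\psi$, all in polynomial time.

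For NP-hardness, I would argue by contrapositive: a polynomial-time decision procedure for $\|\psi\|_+ \leq K$ would yield a polynomial-time, inverse-polynomially accurate approximation of $\|\psi\|_+$. Proposition \ref{prop:well-posed} localizes $\|\psi\|_+$ in the interval $[0,\pi\|\psi\|_{\max}]$, whose length is polynomial in the input size. Running binary search with a hypothetical decision oracle locates the norm to within accuracy $\epsilon$ after $O(\log(\pi\|\psi\|_{\max}/\epsilon))$ oracle calls, which is polynomial in the input size whenever $\epsilon$ is an inverse polynomial. Since each midpoint $K$ queried in binary search is itself rational with polynomial bit size after a polynomial number of iterations, the reduction is fully polynomial-time. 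This contradicts Proposition \ref{prop:nphard}, which establishes that such approximation is NP-hard.

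The step I expect to require the most care is the bit-complexity bound in the NP-membership argument: the Carathéodory coefficients must have polynomially bounded bit size for the certificate to be truly short. This should follow from recognizing that once the $\pi + 1$ supporting vertices are fixed, the coefficients satisfy a rational linear system whose solution bit size is controlled by standard determinant bounds, but the details deserve attention. By contrast, the NP-hardness half is essentially a short observation once Proposition \ref{prop:nphard} and the upper bound from Proposition \ref{prop:well-posed} are in hand.
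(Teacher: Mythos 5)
Your proposal is correct and follows essentially the same route as the paper: NP-hardness via binary search over $[0,\pi\|\psi\|_{\max}]$ using the decision problem as an oracle to contradict Proposition \ref{prop:nphard}, and NP-membership via a certificate built from the binary $\theta$ vectors defining points of $\mathcal{S}_K$. Your membership half is in fact more careful than the paper's one-line claim, since you make explicit the Carath\'eodory bound of $\pi+1$ vertices and the polynomial bit-size of the convex weights via Cramer-type bounds -- details the paper leaves implicit.
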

\begin{proof}
The problem is \textsc{NP}-hard by reduction from the problem of Proposition \ref{prop:nphard}. In particular, a binary search can approximate the norm using this decision problem, $(\|\psi\|_+ \leq K)?$, as an oracle. By Proposition \ref{prop:ranks}, the search can be initialized over the interval $[0, \pi\|\psi\|_{\max}]$. Furthermore, the decision problem is in \textsc{NP} because we can use the (polynomial-sized) $\theta$ from (\ref{eq:blam}) as a certificate.
\end{proof}

\subsection{Stochastic Complexity of Norm}

We next show that our norm $\|\psi\|_+$ has low stochastic complexity. Let $X = \{x\langle 1\rangle, \ldots, x\langle n\rangle\}$, and suppose $\sigma_i$ are independent and identically distributed (i.i.d.)  Rademacher random variables (i.e., $\sigma_i = \pm 1$ with probability $\frac{1}{2}$) \citep{bartlett2002,srebro2010}. The Rademacher complexity for a set of functions $\mathcal{H}$ is   $\mathsf{R}(\mathcal{H}) = \mathbb{E}(\sup_{h\in\mathcal{H}}\textstyle\frac{1}{n}|\sum_{i=1}^n\sigma_i\cdot h(x\langle i\rangle)|)$, and the \emph{worst case} Rademacher complexity of $\mathcal{H}$ is $\mathsf{W}(\mathcal{H}) = \sup_{X} \mathbb{E}_\sigma(\sup_{h\in\mathcal{H}}\textstyle\frac{1}{n}|\sum_{i=1}^n\sigma_i\cdot h(x\langle i\rangle)|)$. These notions can be used to measure the complexity of sets of matrices \citep{srebro2005rank} or tensors \citep{aswani2016}. The idea is to interpret each nonnegative tensor as a function $\psi : \mathcal{R} \rightarrow \mathbb{R}_+$ from a set of indices $x \in \mathcal{R}$ to the corresponding entry of the tensor $\psi_{x}$. This complexity notion is useful for the completion problem because it can be directly translated into generalization bounds.

\begin{proposition}
We have $\mathsf{R}(\mathcal{C}_\lambda) \leq \mathsf{W}(\mathcal{C}_\lambda) \leq 2\lambda\sqrt{\rho/n}$.
\end{proposition}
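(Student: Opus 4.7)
The first inequality $\mathsf{R}(\mathcal{C}_\lambda) \leq \mathsf{W}(\mathcal{C}_\lambda)$ is immediate: moving the supremum over $X$ outside the expectation over $\sigma$ can only increase the quantity. So the work is in bounding $\mathsf{W}(\mathcal{C}_\lambda)$.

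My first move would be to reduce the supremum over the polytope $\mathcal{C}_\lambda$ to a supremum over its vertices $\mathcal{S}_\lambda$. For fixed $X$ and $\sigma$, the map $\psi \mapsto \frac{1}{n}|\sum_{i=1}^n \sigma_i \psi_{x\langle i\rangle}|$ is a convex function of $\psi$ (absolute value of a linear functional), so it attains its maximum over the polytope $\mathcal{C}_\lambda = \conv(\mathcal{S}_\lambda)$ (Proposition \ref{prop:convequiv}) at some vertex in $\mathcal{S}_\lambda$. Consequently $\mathsf{W}(\mathcal{C}_\lambda) = \mathsf{W}(\mathcal{S}_\lambda)$, and it suffices to bound a Rademacher-type expectation over a finite class.

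Next I would strip the absolute value by doubling the class: writing $|a| = \max(a,-a)$ yields
\begin{equation*}
\sup_{\psi\in\mathcal{S}_\lambda}\Bigl|\sum_{i=1}^n \sigma_i\psi_{x\langle i\rangle}\Bigr| \;=\; \sup_{\psi\in\mathcal{S}_\lambda\cup(-\mathcal{S}_\lambda)}\sum_{i=1}^n \sigma_i\psi_{x\langle i\rangle}.
\end{equation*}
Since each element of $\mathcal{S}_\lambda$ is determined by the binary parameters $\theta \in \{0,1\}^\rho$ (one coordinate per mode), we have $|\mathcal{S}_\lambda \cup (-\mathcal{S}_\lambda)| \leq 2^{\rho+1}$. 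For each such $\psi$ the entries lie in $[-\lambda,\lambda]$, so $\sum_{i=1}^n \sigma_i\psi_{x\langle i\rangle}$ is a zero-mean sub-Gaussian random variable with variance proxy $\sum_{i=1}^n \psi_{x\langle i\rangle}^2 \leq n\lambda^2$.

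Finally I would apply Massart's finite-class lemma: the expected maximum of $N$ zero-mean sub-Gaussian variables with variance proxy at most $v^2$ is bounded by $v\sqrt{2\log N}$. This gives, after dividing by $n$,
\begin{equation*}
\mathsf{W}(\mathcal{S}_\lambda) \;\leq\; \frac{1}{n}\cdot \lambda\sqrt{n}\cdot\sqrt{2(\rho+1)\log 2} \;=\; \lambda\sqrt{\frac{2(\rho+1)\log 2}{n}}.
\end{equation*}
Since $r_i\geq 1$ for each mode we have $\rho \geq 1$, hence $2(\rho+1)\log 2 \leq 4\rho\log 2 \leq 4\rho$, which cleans up to the stated $2\lambda\sqrt{\rho/n}$. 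The main obstacle I anticipate is just getting constants right in the last step: naive applications of Massart with the absolute value introduce an extra factor of $2$ and produce a bound like $2\lambda\sqrt{2\rho\log 2/n}$ that is slightly worse than the target; folding the sign into the finite class (rather than splitting into two separate suprema) is what keeps the multiplicative constant at $2$ rather than $2\sqrt{2\log 2}$.
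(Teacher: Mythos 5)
Your proof is correct and follows essentially the same route as the paper's: reduce the supremum over $\mathcal{C}_\lambda$ to its vertices $\mathcal{S}_\lambda$, fold the absolute value into a signed class of size at most $2^{\rho+1}$ (the paper's $\mathcal{P}_\lambda$), apply Massart's finite-class lemma with variance proxy $n\lambda^2$, and absorb the $\log 2\cdot(\rho+1)$ factor into $2\rho$. The only cosmetic difference is that you justify the convex-hull-to-vertices step with a direct convexity argument, while the paper cites standard references for the same fact.
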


\begin{proof}
First note that from their definitions we get $\mathsf{R}(\mathcal{C}_\lambda) \leq \mathsf{W}(\mathcal{C}_\lambda)$. Next define the set $\mathcal{P}_{\lambda} = \{\pm\psi : \psi \in \mathcal{S}_\lambda\}$, and recall that $\mathcal{C}_{\lambda} = \conv(\mathcal{S}_\lambda)$ by Proposition \ref{prop:convequiv}. This means $\mathsf{W}(\mathcal{C}_\lambda) = \mathsf{W}(\mathcal{S}_\lambda)$ \citep{ledoux1991,bartlett2002}. Next observe that
\begin{equation}
\begin{aligned}
\mathsf{W}(\mathcal{C}_\lambda) =  \mathsf{W}(\mathcal{S}_\lambda) &\textstyle=\sup_X\mathbb{E}_\sigma\big(\sup_{\psi\in\mathcal{S}_\lambda}\textstyle\frac{1}{n}\big|\sum_{i=1}^n\sigma_i\cdot \psi_{x\langle i\rangle}\big|\big)\\
&\textstyle=\sup_X\mathbb{E}_\sigma\big(\max_{\psi\in\mathcal{P}_{\lambda}}\textstyle\frac{1}{n}\cdot\sum_{i=1}^n\sigma_i\cdot \psi_{x\langle i\rangle}\big) \leq \sup_X r\sqrt{2\log \#\mathcal{P}_\lambda}/n
\end{aligned}
\end{equation}
where in the second line we replaced the supremum with a maximum, since the set $\mathcal{P}_\lambda$ is finite, and used the Finite Class Lemma \citep{massart2000} with $r = \max_{\psi\in\mathcal{P}_\lambda}\textstyle\sqrt{\sum_{i=1}^n(\psi_{x\langle i\rangle})^2} \leq \lambda\sqrt{n}$. This inequality on $r$ is due to the fact that $\mathcal{P}_\lambda$ consists of tensors whose entries are from $\{-\lambda,0,\lambda\}$. Thus $\mathsf{W}(\mathcal{C}_\lambda) \leq \lambda\sqrt{(2\log2)\cdot (\rho+1)/n}$.  The result follows by noting that $\log 2\cdot(\rho+1)\leq2\rho$.$\qquad$
\end{proof}

\begin{rem}
Because $\psi$ has $\pi = O(r^p)$ entries, the Rademacher complexity in a typical tensor norm (e.g., max and Frobenius norms) will be $O(\sqrt{\pi/n})=O(\sqrt{r^p/n})$.  However, the Rademacher complexity in our norm $\|\psi\|_+$ is $O(\sqrt{\rho/n}) = O(\sqrt{rp/n})$, which is exponentially smaller.  
\end{rem}

\section{Tensor Completion}

Suppose we have data $(x\langle i\rangle, y\langle i\rangle) \in \mathcal{R}\times\mathbb{R}$ for $i=1,\ldots,n$. Let $I = \{i_1,\ldots,i_u\}\subseteq [n]$ be any set of points that specify all the unique $x\langle i\rangle$ for $i = 1,\ldots,n$, meaning the set $U = \{x\langle i_1\rangle,\ldots,x\langle i_u\rangle\}$ does not have any repeated points and $x\langle i\rangle \in U$ for all $i=1,\ldots,n$. The nonnegative tensor completion problem using our norm $\|\psi\|_+$ is given by
\begin{equation}
\label{eqn:nntcon}
\begin{aligned}
\widehat{\psi} \in \arg\min_{\psi}\ & \textstyle\frac{1}{n}\sum_{i=1}^n \big(y\langle i\rangle - \psi_{x\langle i\rangle}\big)^2\\
\mathrm{s.t.}\ & \|\psi\|_+\leq\lambda
\end{aligned}
\end{equation}
We will study the statistical properties of the above estimate and describe the elements of algorithm to solve the above optimization problem. The purpose of defining the set $U$ is that it is used in constructing the algorithm used to solve the above problem.

\subsection{Statistical Guarantees}

Though in the previous section we calculated a Rademacher complexity for nonnegative tensors in $\mathcal{C}_\lambda$ viewed as functions, here we use an alternative approach to derive generalization bounds. The reason is that generalization bounds using the Rademacher complexity are not tight here. 

Our approach is based on the observation that the nonnegative tensor completion problem (\ref{eqn:nntcon}) using our norm $\|\psi\|_+$ is equivalent to a \emph{convex aggregation} problem \citep{nemirovski2000topics,tsybakov2003optimal,lecue2013empirical} for a finite set of functions. In particular, by Proposition \ref{prop:convequiv} we have $\{\psi : \|\psi\|_+\leq\lambda\} = \mathcal{C}_\lambda = \conv(\mathcal{S}_\lambda)$. The implication is we can directly apply existing results for convex aggregation to provide a tight generalization bound for the solution of (\ref{eqn:nntcon}).

\begin{proposition}[\citep{lecue2013empirical}]
Suppose $|y| \leq b$ almost surely. Given any $\delta > 0$, with probability at least $1-4\delta$ we have that
\begin{equation}
\mathbb{E}\big((y - \psi_x)^2\big) \leq \min_{\varphi \in \mathcal{C}_\lambda} \mathbb{E}\big((y-\varphi_x)^2\big) + c_0\cdot \max\big[b^2,\lambda^2\big]\cdot \max\big[\zeta_n, \textstyle\frac{\log(1/\delta)}{n}\big],
\end{equation}
where $c_0$ is an absolute constant and 
\begin{equation}
\label{eqn:zn}
    \zeta_n = \begin{cases} \frac{2^\rho}{n}, & \text{if } 2^\rho \leq \sqrt{n}\\
    \sqrt{\frac{1}{n}\log\Big(\frac{e2^\rho}{\sqrt{n}}\Big)}, & \text{if } 2^\rho > \sqrt{n}\end{cases}
\end{equation}
\end{proposition}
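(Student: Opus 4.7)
The plan is to recognize (\ref{eqn:nntcon}) as a \emph{convex aggregation} problem in the sense of \citep{nemirovski2000topics,tsybakov2003optimal,lecue2013empirical} over the finite dictionary $\mathcal{S}_\lambda$, and then invoke the Lecué--Mendelson convex aggregation oracle inequality essentially verbatim.

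First I would use Proposition \ref{prop:convequiv} to rewrite the feasible set as $\{\psi : \|\psi\|_+ \leq \lambda\} = \mathcal{C}_\lambda = \conv(\mathcal{S}_\lambda)$. Because each element of $\mathcal{S}_\lambda$ is parameterized by the binary vectors $\theta^{(k)} \in \{0,1\}^{r_k}$ for $k\in[p]$, the dictionary has cardinality $M := |\mathcal{S}_\lambda| \leq \prod_{k=1}^p 2^{r_k} = 2^\rho$. Viewed as functions on $\mathcal{R}$, every $\psi \in \mathcal{S}_\lambda$ takes values in $\{0,\lambda\}$, so the dictionary is uniformly bounded by $\lambda$. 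Thus $\widehat\psi$ from (\ref{eqn:nntcon}) is exactly the empirical risk minimizer for squared loss over the convex hull of a finite, uniformly bounded dictionary of known cardinality.

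Next I would invoke the optimal convex aggregation oracle inequality from \citep{lecue2013empirical}: under the assumptions above together with $|y|\leq b$ almost surely, with probability at least $1-4\delta$ the excess risk of the ERM over $\conv(\mathcal{S}_\lambda)$ is bounded by an absolute constant times $\max[b^2,\lambda^2]\cdot\max\bigl[\Phi_{n,M},\log(1/\delta)/n\bigr]$, where $\Phi_{n,M} = M/n$ when $M \leq \sqrt{n}$ and $\Phi_{n,M} = \sqrt{n^{-1}\log(eM/\sqrt{n})}$ otherwise. The envelope $\max[b^2,\lambda^2]$ arises because every squared residual $(y-\varphi_x)^2$ with $\varphi \in \mathcal{C}_\lambda$ and $|y| \leq b$ is bounded by $(\max[b,\lambda])^2$. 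Substituting $M \leq 2^\rho$ and using that $\Phi_{n,M}$ is monotone non-decreasing in $M$, one checks by cases (according to whether $2^\rho \leq \sqrt{n}$ or $2^\rho > \sqrt{n}$, and in the latter case whether $M \leq \sqrt{n}$) that $\Phi_{n,M} \leq \zeta_n$ as defined in (\ref{eqn:zn}), which reproduces the claimed inequality.

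The main obstacle is not a new calculation but bookkeeping: one must verify that all hypotheses of the Lecué--Mendelson oracle inequality (finite uniformly bounded dictionary, bounded target, squared loss, ERM over the convex hull) hold in our setting, that the envelope parameter comes out as $\max[b,\lambda]$ rather than a looser $b+\lambda$, and that passing from $M$ to the upper bound $2^\rho$ is done in a direction consistent with the piecewise definition of $\zeta_n$. Each of these checks is immediate from the structure of $\mathcal{S}_\lambda$ and the form of (\ref{eqn:nntcon}), so the proof effectively reduces to a citation.
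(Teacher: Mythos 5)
Your proposal is correct and takes essentially the same approach as the paper: the paper offers no separate proof of this proposition, stating it as a direct citation of \citep{lecue2013empirical} justified by the observation (via Proposition \ref{prop:convequiv}) that the feasible set equals $\conv(\mathcal{S}_\lambda)$, so that $\widehat{\psi}$ is exactly an empirical risk minimizer for a convex aggregation problem over a finite dictionary. Your extra bookkeeping (cardinality $|\mathcal{S}_\lambda|\leq 2^\rho$, uniform bound $\lambda$ on the dictionary, the envelope $\max[b,\lambda]$, and monotonicity of the aggregation rate in $M$) just makes explicit what the paper leaves implicit.
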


\begin{rem}
We make two comments. First, note that $\zeta_n = o(\sqrt{\rho/n})$. Second, in some regimes $\zeta_n$ can be considerably faster than the $\sqrt{\rho/n}$ rate.
\end{rem}

Generalization bounds under specific noise models, such as an additive noise model, follow as a corollary to the above proposition combined with Proposition \ref{prop:ranks}. 

\begin{cor}
\label{cor:mtrs}
Suppose $\varphi$ is a nonnegative tensor with $\rank_+(\varphi) = k$ and $\|\varphi\|_{\max} \leq \mu$. If $(x\langle i \rangle, y\langle i \rangle)$ are independent and identically distributed with $|y\langle i \rangle - \varphi_{x\langle i\rangle}| \leq e$ almost surely and $\mathbb{E}y\langle i\rangle = \varphi_{x\langle i\rangle}$. Then given any $\delta > 0$, with probability at least $1-4\delta$ we have 
\begin{equation}
\mathbb{E}\big((y - \widehat{\psi}_x)^2\big) \leq e^2 + c_0\cdot \big(\mu k + e)^2\cdot \max\big[\zeta_n, \textstyle\frac{\log(1/\delta)}{n}\big],
\end{equation}
where $\zeta_n$ is as in (\ref{eqn:zn}) and $c_0$ is an absolute constant.
\end{cor}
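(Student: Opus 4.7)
The plan is to obtain Corollary \ref{cor:mtrs} as a direct specialization of the preceding convex aggregation proposition, with the radius $\lambda$ chosen from the rank and sup-norm of the true signal $\varphi$. The ingredients are Proposition \ref{prop:ranks} to place $\varphi$ inside $\mathcal{C}_\lambda$, the additive-noise hypotheses to bound both $|y|$ and the approximation error, and a short arithmetic simplification of the constant.

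Concretely, I would set $\lambda := \mu k$. By Proposition \ref{prop:ranks}, $\|\varphi\|_+ \leq \rank_+(\varphi)\cdot\|\varphi\|_{\max} \leq \mu k = \lambda$, so $\varphi \in \mathcal{C}_\lambda$. Feasibility of $\varphi$ in the minimization over $\mathcal{C}_\lambda$ then gives
$$\min_{\varphi' \in \mathcal{C}_\lambda} \mathbb{E}\big((y - \varphi'_x)^2\big) \leq \mathbb{E}\big((y - \varphi_x)^2\big) \leq e^2,$$
where the last inequality uses the a.s. bound $|y\langle i\rangle - \varphi_{x\langle i\rangle}| \leq e$ (the hypothesis $\mathbb{E}(y\langle i\rangle)=\varphi_{x\langle i\rangle}$ is not even needed for this step, though it reinforces that the residual is a zero-mean noise of magnitude at most $e$). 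For the other hypothesis of the proposition, the triangle inequality and $\|\varphi\|_{\max}\leq\mu$ give $|y| \leq |\varphi_x| + e \leq \mu + e$ almost surely, so we may take $b := \mu + e$.

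Substituting into the preceding proposition yields, with probability at least $1-4\delta$,
$$\mathbb{E}\big((y - \widehat{\psi}_x)^2\big) \leq e^2 + c_0 \cdot \max\big[(\mu+e)^2, (\mu k)^2\big] \cdot \max\big[\zeta_n, \textstyle\frac{\log(1/\delta)}{n}\big].$$
Since $k \geq 1$, we have both $(\mu+e)^2 \leq (\mu k + e)^2$ and $(\mu k)^2 \leq (\mu k + e)^2$, so the maximum collapses to $(\mu k + e)^2$ and the stated bound follows.

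There is no real obstacle; the only thing to notice is that choosing $\lambda = \mu k$ pairs the norm radius with the signal bound so that the two terms $b^2$ and $\lambda^2$ in the generic convex aggregation bound can be folded into the single factor $(\mu k + e)^2$, rather than carried as separate bookkeeping.
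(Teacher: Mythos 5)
Your proposal is correct and follows exactly the route the paper intends: the paper gives no separate proof of Corollary~\ref{cor:mtrs}, stating only that it follows from the convex aggregation proposition combined with Proposition~\ref{prop:ranks}, and your argument (take $\lambda=\mu k$ so that $\|\varphi\|_+\leq\rank_+(\varphi)\|\varphi\|_{\max}\leq\lambda$ puts $\varphi\in\mathcal{C}_\lambda$, take $b=\mu+e$, bound the oracle term by $e^2$, and collapse $\max[b^2,\lambda^2]$ into $(\mu k+e)^2$ using $k\geq 1$) supplies precisely those omitted details.
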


\subsection{Computational Complexity}

Though (\ref{eqn:nntcon}) is a convex optimization problem, our next result shows that solving it is NP-hard.

\begin{proposition}
\label{prop:nntnph}
It is NP-hard to solve the nonnegative tensor completion problem (\ref{eqn:nntcon}) to an arbitrary accuracy.
\end{proposition}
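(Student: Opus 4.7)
The plan is to reduce from the problem of approximating the dual norm $\|\varphi\|_\circ = \sup\{\langle\varphi,\psi\rangle : \psi \in \mathcal{C}_1\}$, which the proof of Proposition~\ref{prop:nphard} shows is NP-hard via the Witsenhausen reduction of \textsc{max cut} to $\sup_{\psi\in\mathcal{S}_1}\langle\varphi,\psi\rangle$. It therefore suffices to exhibit a polynomial-time procedure that, given $\varphi$ and $\epsilon>0$, builds an instance of (\ref{eqn:nntcon}) whose (approximate) optimal value determines $\|\varphi\|_\circ$ to additive accuracy $\epsilon$.

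Given $\varphi$ and $\epsilon$, I would choose a scaling $t = \Theta(\pi/\epsilon)$, enumerate $\mathcal{R}$ as $\{x\langle 1\rangle,\ldots,x\langle\pi\rangle\}$, and construct the instance of (\ref{eqn:nntcon}) with observations $y\langle i\rangle = t\,\varphi_{x\langle i\rangle}$ and completion bound $\lambda=1$. The problem becomes the Euclidean projection of $t\varphi$ onto $\mathcal{C}_1$:
\begin{equation*}
v^\star \;=\; \tfrac{1}{\pi}\min_{\psi\in\mathcal{C}_1}\|t\varphi - \psi\|_F^2 \;=\; \tfrac{1}{\pi}\bigl[t^2\|\varphi\|_F^2 - 2t\langle\varphi,\psi^\star\rangle + \|\psi^\star\|_F^2\bigr].
\end{equation*}
Because every $\psi\in\mathcal{C}_1$ has entries in $[0,1]$, we have $\|\psi\|_F^2 \leq \pi$. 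Comparing $\psi^\star$ against any $\hat\psi$ that attains $\|\varphi\|_\circ$ on $\mathcal{C}_1$ via the identity above then yields $0 \leq \|\varphi\|_\circ - \langle\varphi,\psi^\star\rangle \leq \pi/(2t)$. Solving for the inner product defines the plug-in estimator $\mu^\star := (t^2\|\varphi\|_F^2 - \pi v^\star)/(2t) = \langle\varphi,\psi^\star\rangle - \|\psi^\star\|_F^2/(2t)$, which satisfies $|\mu^\star - \|\varphi\|_\circ| \leq \pi/t$.

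From a $\delta$-approximate optimal value $\tilde v$ returned by an oracle for (\ref{eqn:nntcon}), the plug-in estimator $\tilde\mu := (t^2\|\varphi\|_F^2 - \pi\tilde v)/(2t)$ satisfies $|\tilde\mu - \|\varphi\|_\circ| \leq \pi/t + \pi\delta/(2t)$, so $t=\Theta(\pi/\epsilon)$ with any fixed constant $\delta$ already yields an $\epsilon$-approximation of $\|\varphi\|_\circ$ in polynomial time, contradicting Proposition~\ref{prop:nphard}. The main obstacle will be the accuracy bookkeeping and polynomial-encoding check: in particular, that $t\varphi$ has bit-length polynomial in the original input and $\log(1/\epsilon)$, and that, if the oracle returns a near-optimal tensor $\tilde\psi$ rather than its value, strong convexity of $\|t\varphi-\psi\|_F^2$ supplies a perturbation bound $\|\tilde\psi - \psi^\star\|_F \leq \sqrt{\pi\delta}$ so that $\langle\varphi,\tilde\psi\rangle$ can replace $\tilde\mu$ with only a polynomial-in-$\delta$ error inflation.
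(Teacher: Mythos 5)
Your proof is correct, but it takes a genuinely different route from the paper's. The paper proves hardness by chaining through the \emph{weak membership problem} for $\mathcal{C}_1$: it invokes Theorem 10 of Friedland--Lim a second time (norm approximation reduces to weak membership, essentially the Gr\"otschel--Lov\'asz--Schrijver ellipsoid machinery), and then reduces weak membership to (\ref{eqn:nntcon}) by the same ``observe every entry of $\mathcal{R}$ exactly once'' trick you use, checking whether the minimal objective value is small. You instead reduce directly from approximation of the dual norm $\|\varphi\|_{\circ}$ (whose hardness is already established inside the proof of Proposition \ref{prop:nphard} via the \textsc{max cut} reduction), using the elementary geometric fact that Euclidean projection of the distant scaled point $t\varphi$ onto the compact set $\mathcal{C}_1$ nearly maximizes $\langle\varphi,\cdot\rangle$: your key inequality $0 \leq \|\varphi\|_{\circ} - \langle\varphi,\psi^\star\rangle \leq \pi/(2t)$ follows correctly from optimality of $\psi^\star$ against a maximizer $\hat\psi$ together with $\|\psi\|_F^2\leq\pi$ on $\mathcal{C}_1$, and the plug-in estimator algebra checks out. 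What each approach buys: the paper's argument is shorter because it outsources the quantitative work to the cited reduction, whereas yours is more self-contained (no second appeal to Friedland--Lim, no implicit ellipsoid method) and handles approximate oracles explicitly, which the paper's proof glosses over (it implicitly assumes the completion problem is solved exactly when comparing the optimal value to $\delta$). One small caveat in your bookkeeping: the claim that ``any fixed constant $\delta$'' of oracle accuracy suffices presumes \emph{additive} accuracy on the optimal value; since your optimal value $v^\star$ grows like $t^2\|\varphi\|_F^2/\pi$, under a \emph{relative}-accuracy reading of ``arbitrary accuracy'' you would need $\delta$ inverse-polynomial in the input size, which is still perfectly fine for the NP-hardness conclusion but is worth stating.
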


\begin{proof}
Define the ball of radius $\delta > 0$ centered at a nonnegative tensor $\psi$ to be $B(\psi,\delta) = \{\varphi : \|\varphi - \psi\|_F \leq \delta\}$. Next define $W(\mathcal{C}_1,\delta) = \bigcup_{\psi\in\mathcal{C}_1} B(\psi,\delta)$ and $W(\mathcal{C}_1,-\delta) = \{\psi \in \mathcal{C}_1 : B(\psi,\delta) \subseteq\mathcal{C}_1\}$. The weak membership problem for $\mathcal{C}_1$ is that given a nonnegative tensor $\psi$ and a $\delta > 0$ decide whether $\psi \in W(\mathcal{C}_1,\delta)$ or $\psi\notin W(\mathcal{C}_1,-\delta)$. Theorem 10 of \citep{friedland2016computational} shows that approximation of the norm $\|\cdot\|_+$ is polynomial-time reducible to the weak membership problem for $\mathcal{C}_1$. Since Proposition \ref{prop:nphard} shows that approximation of the norm $\|\cdot\|_+$ is NP-hard, the result follows if we can reduce the weak membership problem to (\ref{eqn:nntcon}). 

Suppose we are given inputs $\psi$ and $\delta$ for the weak membership problem. Choose $x\langle i\rangle$ for $i = 1,\ldots,\pi$ such that each element in $\mathcal{R}$ is enumerated exactly once. Next choose $y\langle i\rangle = \psi_{x\langle i\rangle}$. Finally, note if we solve (\ref{eqn:nntcon}) and the minimum objective value is less than or equal to $\delta$, then we have $\psi \in W(\mathcal{C}_1,\delta)$; otherwise we have $\psi \notin W(\mathcal{C}_1,-\delta)$. The result now follows since this was the desired reduction.\end{proof}

We note that the decision version of (\ref{eqn:nntcon}), that is ascertaining whether a given tensor $\psi$ attains an objective less than a given value while also satisfying $\|\psi\|_+\leq \lambda$, is \textsc{NP}-complete. This follows directly from Corollary \ref{cor:npcnorm} and Proposition \ref{prop:nntnph}. 

\subsection{Numerical Computation}

Although it is NP-hard, there is substantial structure that enables efficient numerical computation of global minima of (\ref{eqn:nntcon}). The key observation is that $\mathcal{C}_1$ is a 0-1 polytope, which implies the linear separation problem on this polytope can be solved using integer linear optimization. Integer optimization has well-established global algorithms that are guaranteed to solve the separation problem for this polytope. This is a critical feature that enables the use of the Frank-Wolfe algorithm or one of its variants to solve (\ref{eqn:nntcon}) to a desired numerical tolerance. In fact, the Frank-Wolfe variant known as \emph{Blended Conditional Gradients} (BCG) \citep{braun2019blended} is particularly well-suited for calculating a solution to (\ref{eqn:nntcon}) for the following reasons: 

The first reason is that our problem has structure such that the BCG algorithm will terminate (within numerical tolerance) in a linear number of oracle steps. A sufficient condition for this linear convergence is if the feasible set is a polytope and the objective function is strictly convex over the feasible set. For our problem (\ref{eqn:nntcon}), the objective function can be made strictly convex over the feasible set by an equivalent reformulation. Specifically, we use the equivalent reformulation in which we change the feasible set from $\{\psi : \|\psi\|_+\leq\lambda\} = C_\lambda$ to $\mathrm{Proj}_U(\mathcal{C}_\lambda)$ where the projection is done over the unique indices specified by the set $U$. In fact, this projection is trivial to implement because it simply consists of discarding the entries of $\psi$ that are not observed by the $x\langle i\rangle$ indices.

The second is that the BCG algorithm uses weak linear separation, which accommodates early-termination of the associated integer linear optimization. Integer optimization software tends to find optimal or near-optimal solutions considerably faster than certifying the optimality of such solutions.  Furthermore, a variety of tuning parameters \citep{berthold2018feasibility} can be used to accelerate the search for good primal solutions when exact solutions are not needed. We also deploy a fast alternating maximization heuristic in order to avoid integer optimization oracle calls where possible. Thus, early-termination allows us to deploy a globally convergent algorithm with practical solution times.  


Hence we use the BCG algorithm to compute global minima of (\ref{eqn:nntcon}) to arbitrary numerical tolerance. For brevity we omit a full description of the algorithm, and instead focus on the separation oracle, which is the main component specifically adapted to our application. The oracle is described in Algorithm \ref{alg:wso}, where $\langle \cdot, \cdot\rangle$ is the dot product for tensors viewed as vectors; for notational simplicity we state the oracle in terms of the original space, ignoring projection onto $U$. Output condition (1) provides separation with some vertex $\varphi$, while (2) requires certification that no such vertex exists.

In implementing the weak separation oracle, we first attempt separation with our alternating maximization procedure.  Described as Algorithm \ref{alg:am}, it involves the following objective function:
\begin{equation}
\label{eq:multobj}
\textstyle z_M(\theta) := \sum_{x \in \mathcal{R}} \langle c_x,\psi_x- \textstyle\lambda\cdot\prod_{k=1}^p\theta_{x_k}\rangle
\end{equation}
The separation problem is thus treated as a multilinear binary optimization problem, and the algorithm successively minimizes in each dimension. We apply this heuristic a fixed number of times, randomly complementing entries in the incumbent each time. The procedure runs in polynomial-time, and  is not guaranteed to separate nor can it certify that such separation is impossible. However, in our experiments it succeeds often, offering substantial practical speedups.

If alternating maximization fails to separate, then we solve the integer programming problem:
\begin{equation}
\label{eqn:intprog}
\begin{aligned}
\max_{\varphi,\theta} & \langle c, \psi-\varphi\rangle\\  
 \mbox{s.t. } &\lambda\cdot(1-p)+\textstyle\lambda\cdot\sum_{k=1}^p \theta_{x_k} \leq \varphi_x & x\in\mathcal{R}\\
& 0\leq\varphi_x\leq \lambda\cdot\theta_{x_k} &  k\in[p], x\in\mathcal{R} \\ 
&\theta_{x_k} \in \{0,1\} &  k\in[p], x\in\mathcal{R}
\end{aligned}
\end{equation}

Note that early termination is deployed: we stop whenever an incumbent solution is found such that the objective is greater than $\Phi/K$. If no such solution exists, then the integer programming solver is guaranteed to (eventually) establish a dual bound $z^U$ such that $\langle c, \psi-\varphi\rangle \leq z^U \leq \Phi$.

\begin{algorithm}[t]
   \caption{Weak Separation Oracle for $\mathcal{C}_\lambda$}
   \label{alg:wso}
\begin{algorithmic}
   \STATE {\bfseries Input:} linear objective $c \in \mathbb{R}^{r_1\times\cdots\times r_p}$, point $\psi\in\mathcal{C}_\lambda$, accuracy $K\geq 1$, gap estimate $\Phi > 0$, \\ norm bound $\lambda$
   \STATE {\bfseries Output:} Either (1) vertex $\varphi\in\mathcal{S}_\lambda$ with $\langle c, \psi-\varphi\rangle \geq \Phi/K$, or (2) {\bfseries false:}  $\langle c, \psi-\varphi\rangle \leq \Phi$ for \\ all $\varphi\in\mathcal{C}_\lambda$ 
\end{algorithmic}
\end{algorithm}

\begin{algorithm}[t]
   \caption{Alternating Maximization}
   \label{alg:am}
\begin{algorithmic}
   \STATE {\bfseries Input:} linear objective $c \in \mathbb{R}^{r_1\times\cdots\times r_p}$, point $\psi\in\mathcal{C}_\lambda$, norm bound $\lambda$, incumbent (binary) \\ solution $\hat \theta \in \mathcal{S}_\lambda$
    \STATE {\bfseries Output:} Best known solution $\theta$
\vspace{.1cm}
   \STATE $\theta \gets \hat \theta$
   \STATE $z \gets z_M(\theta)$
   \FOR{$i=1$ {\bfseries to} $p$}
        \FOR{$k=1$ {\bfseries to} $r_{i}$}
            \STATE $\theta^{(i)}_{k} \gets 1- \theta^{(i)}_{k}$
            \IF{$z_{M} (\theta) > z$}
                \STATE  $z \gets z_M(\theta)$
            \ELSE
                \STATE{$\theta^{(i)}_{k} \gets 1- \theta^{(i)}_{k}$}
            \ENDIF
        \ENDFOR
   \ENDFOR
\end{algorithmic}
\end{algorithm}

\begin{figure}[p]
     \centering
     \begin{subfigure}[b]{2.7in}
         \centering
         \includegraphics{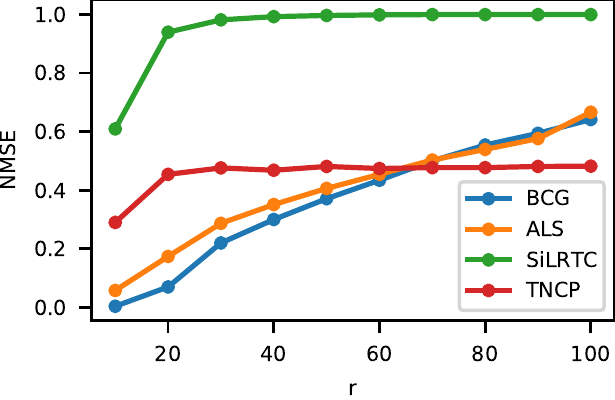}
         \label{fig:1a}
     \end{subfigure}
     \hfill
     \begin{subfigure}[b]{2.7in}
         \centering
         \includegraphics{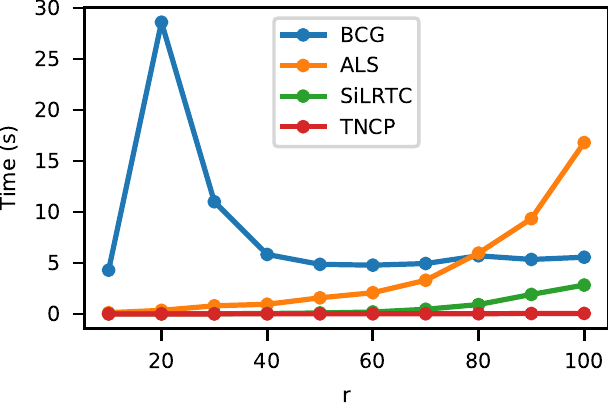}
         \label{fig:1e}
     \end{subfigure}
        \caption{Results for order-3 nonnegative tensors with size $r \times r \times r$ and $n=500$ samples.}
        \label{fig:one}
\end{figure}

\begin{figure}[p]
     \centering
     \begin{subfigure}[b]{2.7in}
         \centering
         \includegraphics{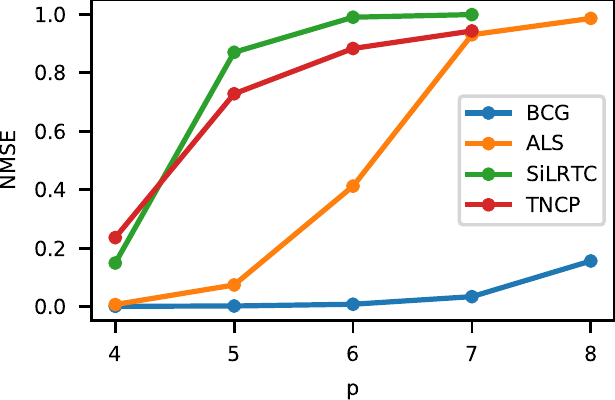}
         \label{fig:1b}
     \end{subfigure}
     \hfill
     \begin{subfigure}[b]{2.7in}
         \centering
         \includegraphics{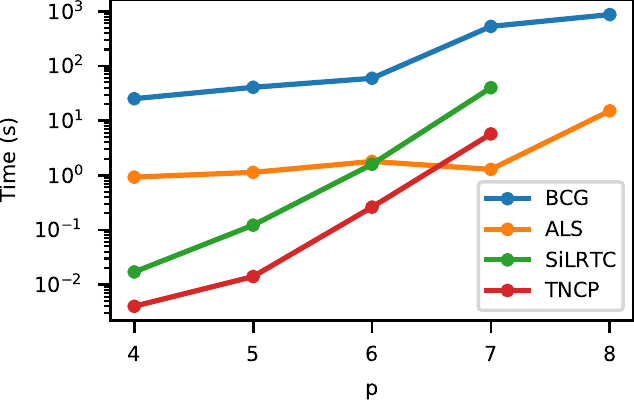}
         \label{fig:1f}
     \end{subfigure}
        \caption{Results for increasing order nonnegative tensors with size $10^{\times p}$ and $n=10,000$ samples.}
        \label{fig:two}
\end{figure}

\begin{figure}[p]
     \centering
     \begin{subfigure}[b]{2.7in}
         \centering
         \includegraphics{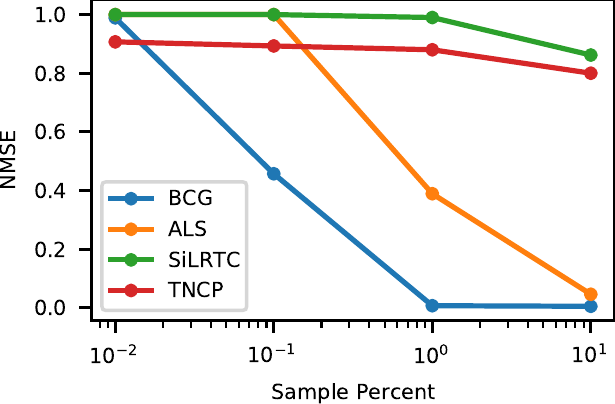}
         \label{fig:1c}
     \end{subfigure}
     \hfill
     \begin{subfigure}[b]{2.7in}
         \centering
         \includegraphics{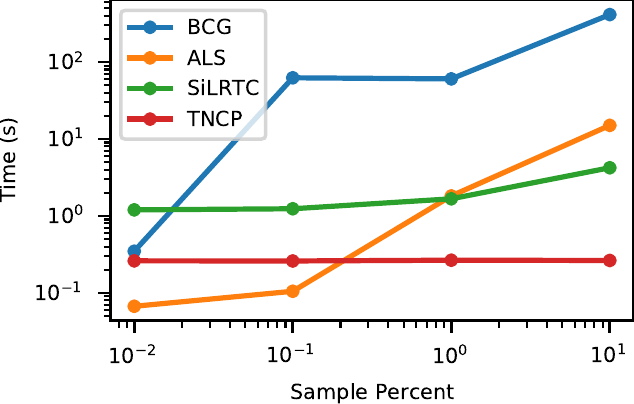}
         \label{fig:1g}
     \end{subfigure}
        \caption{Results for nonnegative tensors with size $10^{\times 6}$ and increasing $n$ samples.}
        \label{fig:three}
\end{figure}

\begin{figure}[p]
     \centering
     \begin{subfigure}[b]{2.7in}
         \centering
         \includegraphics{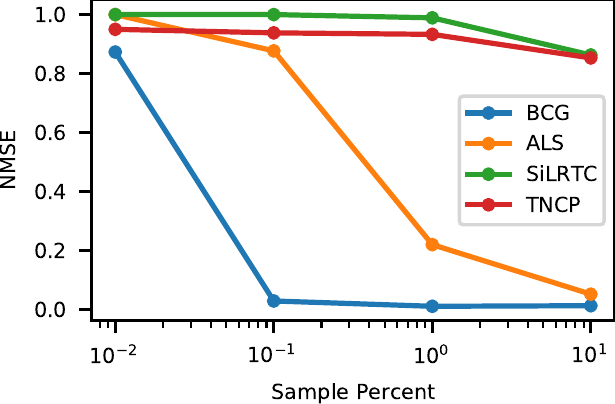}
         \label{fig:1d}
     \end{subfigure}
     \hfill
     \begin{subfigure}[b]{2.7in}
         \centering
         \includegraphics{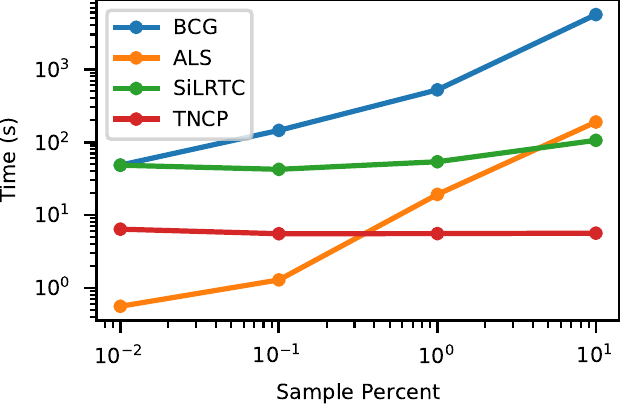}
         \label{fig:1h}
     \end{subfigure}
        \caption{Results for nonnegative tensors with size $10^{\times 7}$ and increasing $n$ samples.}
        \label{fig:four}
\end{figure}

\section{Numerical Experiments}
\label{sec:ne}
Here we present results that show the efficacy and scalability of our algorithm for nonnegative tensor completion. Our experiments were conducted on a laptop computer with 8GB of RAM and an Intel Core i5 2.3Ghz processor with 2-cores/4-threads. The algorithms were coded in Python 3. We used Gurobi v9.1 \citep{gurobi} to solve the integer programs (\ref{eqn:intprog}). As a benchmark, we use alternating least squares (ALS) -- which is often called a ``workhorse'' for numerical tensor problems \citep{kolda2009} -- as well as two state-of-the-art methods implemented by the PyTen package \citep{song2019tensor}, namely the simple low rank tensor completion (SiLRTC) algorithm \citep{liu2012tensor} and the trace norm regularized \textsc{cp} decomposition (TNCP) algorithm. The PyTen package is available from \url{https://github.com/datamllab/pyten} under a GPL 2 license.

To minimize the impact of hyperparameter selection in our numerical results, we provided the ground truth values when possible. For instance, in our nonnegative tensor completion formulation (\ref{eqn:nntcon}) we chose $\lambda$ to be the smallest value for which we could certify that $\|\psi\|_+\leq\lambda$ for the true tensor $\psi$. This was accomplished by construction of the true tensor $\psi$. For ALS and TNCP, we used a nonnegative rank $k$ that was the smallest value for which we could certify that $\rank_+(\psi) \leq k$. This was also accomplished by construction of the true tensor $\psi$. A last note is that ALS often works better when used with L2 regularization \citep{navasca2008swamp}. The hyperparameter for the L2 regularization for ALS was chosen in a way favorable to ALS in order to maximize its accuracy.

\subsection{Experiments with Third-Order Tensors}
\label{sec:tot}
Our first set of results concerns tensors of order $p=3$ with increasing dimensions. In each experiment, the true tensor $\psi$ was constructed by randomly choosing 10 points from $\mathcal{S}_1$ and then taking a random convex combination. This construction ensures $\|\psi\|_+\leq 1$ and $\rank_+(\psi) \leq 10$. We used $n=500$ samples (with indices sampled with replacement). Each experiment was repeated 100 times, and the results are shown in Figure \ref{fig:one}. A table of these values and their standard error is found in the Appendix. We measured accuracy using normalized mean squared error (NMSE) $\|\widehat{\psi}-\psi\|_F^{\ 2}/\|\psi\|_F^{\ 2}$, which is a more stringent measure than used in Corollary \ref{cor:mtrs} because the statistical theoretical result does not include normalization. The results show that our approach provides modestly higher accuracy but requires more computation time. However, computation time remains on the order of seconds for the various tensor sizes. We note that the NMSE value of approximately $0.48$ that TNCP converges to is the value achieved by a tensor that is identically the average of $y\langle i \rangle$ in all its entries.

\subsection{Experiments with Increasing Tensor Order}
Our second set of results concerns tensors with increasing order $p$, where each dimension takes the value $r_{i}=10$ for $i= 1, \dots, p$. In each experiment, the true tensor was constructed by the method in Sec. \ref{sec:tot}. We used $n=10,000$ samples (with indices sampled with replacement). Each experiment was repeated 100 times, and the results are shown in Figure \ref{fig:two} with the full values and standard error given in the Appendix. SiLRTC and TNCP were unable to run for tensors with $10^8$ entries. The results show that our approach provides much higher accuracy but requires more computation time. However, computation remains on the order of minutes even for the largest tensor with $10^8$ entries.


\subsection{Experiments with Increasing Sample Size}
Our third and fourth set of results concerns tensors of size $10^{\times 6}$ and $10^{\times 7}$, respectively, where experiments are conducted with increasing sample size, given in the table as percentage of total entries. The true tensor $\psi$ was constructed as in Sec. \ref{sec:tot}. We begin with sample percentage 0.01\% (with indices sampled with replacement), and extend each set of experiments' sample size by one order of magnitude. Each experiment was repeated 100 times, and the results are shown in Figures \ref{fig:three} and \ref{fig:four} with the full values and standard error given in the Appendix. Again, the numerical results show that our approach provides substantially higher accuracy but requires more computation time. The computation time remains on the order of minutes for most of the sampling schemes, excepting the draw of $10^6$ entries for a tensor with $10^7$ entries (i.e., about 1.5 hours).

\section{Conclusion}

We defined a new norm for nonnegative tensors and used it to develop an algorithm for nonnegative tensor completion that provably converges in a linear number of oracle steps and meets the information-theoretic sample complexity rate. Its efficacy and scalability were demonstrated using experiments. The next step is to generalize this approach to all tensors. In fact, our norm definitions, optimization formulations, algorithm design, and theoretical analysis all extend to general tensors.



\newpage 
\bibliography{nonten}

\begin{thebibliography}{}

\bibitem[Arora et~al., 1998]{arora1998proof}
Arora, S., Lund, C., Motwani, R., Sudan, M., and Szegedy, M. (1998).
\newblock Proof verification and the hardness of approximation problems.
\newblock {\em Journal of the ACM (JACM)}, 45(3):501--555.

\bibitem[Aswani, 2016]{aswani2016}
Aswani, A. (2016).
\newblock Low-rank approximation and completion of positive tensors.
\newblock {\em SIAM Journal on Matrix Analysis and Applications},
  37(3):1337--1364.

\bibitem[Barak and Moitra, 2016]{barak2016noisy}
Barak, B. and Moitra, A. (2016).
\newblock Noisy tensor completion via the sum-of-squares hierarchy.
\newblock In {\em Conference on Learning Theory}, pages 417--445. PMLR.

\bibitem[Bartlett and Mendelson, 2002]{bartlett2002}
Bartlett, P. and Mendelson, S. (2002).
\newblock Rademacher and gaussian complexities: Risk bounds and structural
  results.
\newblock {\em J. Mach. Learn. Res.}

\bibitem[Berthold et~al., 2018]{berthold2018feasibility}
Berthold, T., Hendel, G., and Koch, T. (2018).
\newblock From feasibility to improvement to proof: three phases of solving
  mixed-integer programs.
\newblock {\em Optimization Methods and Software}, 33(3):499--517.

\bibitem[Braun et~al., 2019]{braun2019blended}
Braun, G., Pokutta, S., Tu, D., and Wright, S. (2019).
\newblock Blended conditonal gradients.
\newblock In {\em International Conference on Machine Learning}, pages
  735--743. PMLR.

\bibitem[Chandrasekaran et~al., 2012]{chandrasekaran2012convex}
Chandrasekaran, V., Recht, B., Parrilo, P.~A., and Willsky, A.~S. (2012).
\newblock The convex geometry of linear inverse problems.
\newblock {\em Foundations of Computational mathematics}, 12(6):805--849.

\bibitem[Dauwels et~al., 2011]{6174300}
Dauwels, J., Garg, L., Earnest, A., and Pang, L.~K. (2011).
\newblock Handling missing data in medical questionnaires using tensor
  decompositions.
\newblock In {\em 2011 8th International Conference on Information,
  Communications Signal Processing}, pages 1--5.

\bibitem[Drenick, 1992]{drenick1992multilinear}
Drenick, R. (1992).
\newblock Multilinear programming: Duality theories.
\newblock {\em Journal of optimization theory and applications},
  72(3):459--486.

\bibitem[Friedland and Lim, 2014]{friedland2014}
Friedland, S. and Lim, L.-H. (2014).
\newblock Computational complexity of tensor nuclear norm.
\newblock Submitted.

\bibitem[Friedland and Lim, 2016]{friedland2016computational}
Friedland, S. and Lim, L.-H. (2016).
\newblock The computational complexity of duality.
\newblock {\em SIAM Journal on Optimization}, 26(4):2378--2393.

\bibitem[Gandy et~al., 2011]{gandy2011tensor}
Gandy, S., Recht, B., and Yamada, I. (2011).
\newblock Tensor completion and low-n-rank tensor recovery via convex
  optimization.
\newblock {\em Inverse problems}, 27(2):025010.

\bibitem[{Gurobi Optimization, LLC}, 2021]{gurobi}
{Gurobi Optimization, LLC} (2021).
\newblock {Gurobi Optimizer Reference Manual}.

\bibitem[Hansen, 1979]{hansen1979methods}
Hansen, P. (1979).
\newblock Methods of nonlinear 0-1 programming.
\newblock In {\em Annals of Discrete Mathematics}, volume~5, pages 53--70.
  Elsevier.

\bibitem[Hillar and Lim, 2013]{hillar2013}
Hillar, C. and Lim, L.-H. (2013).
\newblock Most tensor problems are np-hard.
\newblock {\em J. ACM}, 60(6):45:1--45:39.

\bibitem[Kolda and Bader, 2009]{kolda2009}
Kolda, T. and Bader, B. (2009).
\newblock Tensor decompositions and applications.
\newblock {\em SIAM Review}, 51(3):455--500.

\bibitem[Lecu{\'e} et~al., 2013]{lecue2013empirical}
Lecu{\'e}, G. et~al. (2013).
\newblock Empirical risk minimization is optimal for the convex aggregation
  problem.
\newblock {\em Bernoulli}, 19(5B):2153--2166.

\bibitem[Ledoux and Talagrand, 1991]{ledoux1991}
Ledoux, M. and Talagrand, M. (1991).
\newblock {\em Probability in Banach Spaces: Isoperimetry and Processes}.
\newblock Springer.

\bibitem[Li et~al., 2008]{li2008image}
Li, X., Gunturk, B., and Zhang, L. (2008).
\newblock Image demosaicing: A systematic survey.
\newblock In {\em Visual Communications and Image Processing 2008}, volume
  6822, page 68221J. International Society for Optics and Photonics.

\bibitem[Liu et~al., 2012]{liu2012tensor}
Liu, J., Musialski, P., Wonka, P., and Ye, J. (2012).
\newblock Tensor completion for estimating missing values in visual data.
\newblock {\em IEEE transactions on pattern analysis and machine intelligence},
  35(1):208--220.

\bibitem[Massart, 2000]{massart2000}
Massart, P. (2000).
\newblock Some applications of concentration inequalities to statistics.
\newblock {\em Annales de la facult\'{e} des sciences de Toulouse S\'{e}r. 6},
  9(2):245--303.

\bibitem[Montanari and Sun, 2018]{montanari2018spectral}
Montanari, A. and Sun, N. (2018).
\newblock Spectral algorithms for tensor completion.
\newblock {\em Communications on Pure and Applied Mathematics},
  71(11):2381--2425.

\bibitem[Mu et~al., 2014]{mu2014square}
Mu, C., Huang, B., Wright, J., and Goldfarb, D. (2014).
\newblock Square deal: Lower bounds and improved relaxations for tensor
  recovery.
\newblock In {\em International conference on machine learning}, pages 73--81.
  PMLR.

\bibitem[Navasca et~al., 2008]{navasca2008swamp}
Navasca, C., De~Lathauwer, L., and Kindermann, S. (2008).
\newblock Swamp reducing technique for tensor decomposition.
\newblock In {\em 2008 16th European Signal Processing Conference}, pages 1--5.
  IEEE.

\bibitem[Nemirovski, 2000]{nemirovski2000topics}
Nemirovski, A. (2000).
\newblock Topics in non-parametric statistics.
\newblock {\em Ecole d’Et{\'e} de Probabilit{\'e}s de Saint-Flour}, 28:85.

\bibitem[Padberg, 1989]{padberg1989boolean}
Padberg, M. (1989).
\newblock The boolean quadric polytope: some characteristics, facets and
  relatives.
\newblock {\em Mathematical programming}, 45(1-3):139--172.

\bibitem[Papadimitriou and Yannakakis, 1991]{papadimitriou1991optimization}
Papadimitriou, C.~H. and Yannakakis, M. (1991).
\newblock Optimization, approximation, and complexity classes.
\newblock {\em Journal of computer and system sciences}, 43(3):425--440.

\bibitem[Qi et~al., 2014]{qi2014}
Qi, Y., Comon, P., and Lim, L.-H. (2014).
\newblock Uniqueness of nonnegative tensor approximations.
\newblock {\em arXiv preprint arXiv:1410.8129}.

\bibitem[Qi et~al., 2016]{qi2016uniqueness}
Qi, Y., Comon, P., and Lim, L.-H. (2016).
\newblock Uniqueness of nonnegative tensor approximations.
\newblock {\em IEEE Transactions on Information Theory}, 62(4):2170--2183.

\bibitem[Rao et~al., 2015]{rao2015}
Rao, N., Shah, P., and Wright, S. (2015).
\newblock Forward–backward greedy algorithms for atomic norm regularization.
\newblock {\em IEEE Transactions on Signal Processing}, 63(21):5798--5811.

\bibitem[Rauhut and Stojanac, 2021]{rauhut2021tensor}
Rauhut, H. and Stojanac, {\v{Z}}. (2021).
\newblock Tensor theta norms and low rank recovery.
\newblock {\em Numerical Algorithms}, 88(1):25--66.

\bibitem[Rockafellar and Wets, 2009]{rockafellar2009}
Rockafellar, R. and Wets, R. (2009).
\newblock {\em Variational Analysis}.
\newblock Springer.

\bibitem[Song et~al., 2019]{song2019tensor}
Song, Q., Ge, H., Caverlee, J., and Hu, X. (2019).
\newblock Tensor completion algorithms in big data analytics.
\newblock {\em ACM Transactions on Knowledge Discovery from Data (TKDD)},
  13(1):1--48.

\bibitem[Srebro and Shraibman, 2005]{srebro2005rank}
Srebro, N. and Shraibman, A. (2005).
\newblock Rank, trace-norm and max-norm.
\newblock In {\em International Conference on Computational Learning Theory},
  pages 545--560. Springer.

\bibitem[Srebro et~al., 2010]{srebro2010}
Srebro, N., Sridharan, K., and Tewari, A. (2010).
\newblock Smoothness, low noise and fast rates.
\newblock In {\em Advances in Neural Information Processing Systems}, pages
  2199--2207.

\bibitem[Tsybakov, 2003]{tsybakov2003optimal}
Tsybakov, A.~B. (2003).
\newblock Optimal rates of aggregation.
\newblock In {\em Learning theory and kernel machines}, pages 303--313.
  Springer.

\bibitem[Witsenhausen, 1986]{witsenhausen1986simple}
Witsenhausen, H. (1986).
\newblock A simple bilinear optimization problem.
\newblock {\em Systems \& control letters}, 8(1):1--4.

\bibitem[Yuan and Zhang, 2016]{yuan2016tensor}
Yuan, M. and Zhang, C.-H. (2016).
\newblock On tensor completion via nuclear norm minimization.
\newblock {\em Foundations of Computational Mathematics}, 16(4):1031--1068.

\bibitem[Yuan and Zhang, 2017]{yuan2017incoherent}
Yuan, M. and Zhang, C.-H. (2017).
\newblock Incoherent tensor norms and their applications in higher order tensor
  completion.
\newblock {\em IEEE Transactions on Information Theory}, 63(10):6753--6766.

\bibitem[Zhang et~al., 2019]{zhang2019robust}
Zhang, X., Wang, D., Zhou, Z., and Ma, Y. (2019).
\newblock Robust low-rank tensor recovery with rectification and alignment.
\newblock {\em IEEE Transactions on Pattern Analysis and Machine Intelligence},
  43(1):238--255.

\end{thebibliography}


\newpage\appendix

\section*{Appendix: Tables of Numerical Experiment Results}
\label{sec:appendix}

\begin{table*}[h]
\caption{Results for order-3 nonnegative tensors with $n=500$ samples} 
\label{table:order3}
\begin{center}
\begin{tabular}{ccccc}
\toprule
\textbf{Tensor Size} & \multicolumn{2}{c}{\textbf{BCG with Weak Oracle}}  &\multicolumn{2}{c}{\textbf{Alternating Least Squares}}  \\
& NMSE & Time (s) & NMSE & Time (s) \\
\midrule
$10 \times 10 \times 10$ & $0.004 \pm 0.001$ & $4.29 \pm 0.194$ & $0.058 \pm 0.002$ & $0.122 \pm 0.003$\\
$20 \times 20 \times 20$ & $0.070 \pm 0.003$ & $28.6 \pm 0.753$ & $0.174 \pm 0.005$ & $0.351 \pm 0.012$\\
$30\times 30\times 30$ & $0.220 \pm 0.004$ & $11.0 \pm 0.671$ & $0.287 \pm 0.005$ & $0.796 \pm 0.030$ \\
$40\times 40\times 40$ & $0.300 \pm 0.004$ & $5.83 \pm 0.202$ & $0.351 \pm 0.005$ & $0.947 \pm 0.043$\\
$50\times 50\times 50$ & $0.371 \pm 0.004$ & $4.86 \pm 0.090$ & $0.406 \pm 0.006$ & \hphantom{0}$1.58 \pm 0.093$\\
$60\times 60\times 60$ & $0.434 \pm 0.004$ & $4.78 \pm 0.063$ & $0.454 \pm 0.006$ & \hphantom{0}$2.08 \pm 0.105$\\
$70\times 70\times 70$ & $0.500 \pm 0.004$ & $4.94 \pm 0.063$ & $0.503 \pm 0.008$ & \hphantom{0}$3.30 \pm 0.140$\\
$80 \times 80\times 80$ & $0.554 \pm 0.004$ & $5.73 \pm 0.112$ & $0.539\pm 0.010$ & \hphantom{0}$5.96\pm 0.226$\\
$90\times 90\times 90$ & $0.594\pm 0.004$ & $5.34\pm 0.077$ & $0.576\pm 0.011$ & \hphantom{0}$9.34\pm 0.302$\\
$100\times 100 \times 100$ & $0.641\pm 0.004$ & $5.56\pm 0.072$ & $0.666\pm 0.013$ & \hphantom{0}$16.8\pm 0.568$\\
\midrule
\textbf{Tensor Size} & \multicolumn{2}{c}{\textbf{Simple Low Rank TC (SiLRTC)}}  &\multicolumn{2}{c}{\textbf{Trace Norm \& ADMM (TNCP)}}  \\
& NMSE & Time (s) & NMSE & Time (s)\\
\midrule
$10 \times 10 \times 10$ & $0.609 \pm 0.002$ & $0.004 \pm 0.000$ & $0.290 \pm 0.004$ & $0.003 \pm 0.000$\\
$20 \times 20 \times 20$ & $0.939 \pm 0.000$ & $0.007 \pm 0.000$ & $0.454 \pm 0.005$ & $0.002 \pm 0.000$\\
$30\times 30\times 30$ & $0.981 \pm 0.000$ & $0.019 \pm 0.000$ & $0.476 \pm 0.004$ & $0.003 \pm 0.000$ \\
$40\times 40\times 40$ & $0.992 \pm 0.000$ & $0.043 \pm 0.000$ & $0.468 \pm 0.004$ & $0.004 \pm 0.000$\\
$50\times 50\times 50$ & $0.996 \pm 0.000$ & $0.095 \pm 0.001$ & $0.481 \pm 0.004$ & $0.006 \pm 0.000$\\
$60\times 60\times 60$ & $0.998 \pm 0.000$ & $0.190 \pm 0.002$ & $0.474 \pm 0.004$ & $0.010 \pm 0.000$\\
$70\times 70\times 70$ & $0.999 \pm 0.000$ & $0.456 \pm 0.003$ & $0.477 \pm 0.003$ & $0.014 \pm 0.000$\\
$80 \times 80\times 80$ & $0.999 \pm 0.000$ & $0.918 \pm 0.008$ & $0.477\pm 0.003$ & $0.020 \pm 0.000$\\
$90\times 90\times 90$ & $0.999\pm 0.000$ & \hphantom{0}$1.91\pm 0.032$ & $0.481\pm 0.003$ & $0.040 \pm 0.001$\\
$100\times 100 \times 100$ & $0.999\pm 0.000$ & \hphantom{0}$2.83\pm 0.050$ & $0.482\pm 0.003$ & $0.050\pm 0.001$\\
\bottomrule
\end{tabular}
\end{center}
\end{table*}

\begin{table*}[h]
\caption{Results for increasing order nonnegative tensors and $n=10,000$ samples} \label{table:dim10}
\begin{center}
\begin{tabular}{ccccc}
\toprule
\textbf{Tensor Size} & \multicolumn{2}{c}{\textbf{BCG with Weak Oracle}}  &\multicolumn{2}{c}{\textbf{Alternating Least Squares}} \\
& NMSE & Time (s) & NMSE & Time (s) \\
\midrule
$10^{\times 4}$ & $0.001\pm 0.000$ & $25.2\pm 1.11$ & $0.007\pm 0.001$ & $0.925\pm 0.025$\\
$10^{\times 5}$ & $0.002\pm 0.000$ & $40.7\pm 0.82$ & $0.074\pm 0.006$ & $\hphantom{0}1.13\pm 0.038$\\
$10^{\times 6}$ & $0.008\pm 0.001$ & $59.5 \pm 1.20$ & $0.412\pm 0.025$ & $\hphantom{0}1.79\pm 0.097$\\
$10^{\times 7}$ & $0.034\pm 0.004$ & \hphantom{.}$530 \pm 368\hphantom{.}$ & $0.930\pm 0.019$ & $\hphantom{0}1.28\pm 0.050$\\
$10^{\times 8}$ & $0.156\pm 0.014$ & \hphantom{.}$872 \pm 66\hphantom{0.} $ & $0.986\pm 0.010$ & $\hphantom{0}15.1 \pm 0.199$\\
\midrule
\textbf{Tensor Size} & \multicolumn{2}{c}{\textbf{Simple Low Rank TC (SiLRTC)}}  &\multicolumn{2}{c}{\textbf{Trace Norm \& ADMM (TNCP)}} \\
& NMSE & Time (s) & NMSE & Time (s) \\
\midrule
$10^{\times 4}$ & $0.149\pm 0.004$ & $0.017\pm 0.001$ & $0.236\pm 0.002$ & $0.004\pm 0.000$\\
$10^{\times 5}$ & $0.870\pm 0.002$ & $0.122\pm 0.005$ & $0.728\pm 0.004$ & $0.014\pm 0.000$\\
$10^{\times 6}$ & $0.990\pm 0.000$ & \hphantom{0}$1.59 \pm 0.053$ & $0.883\pm 0.003$ & $0.261\pm 0.000$\\
$10^{\times 7}$ & $0.999\pm 0.000$ & $\hphantom{00.}40 \pm 1.23\hphantom{0}$ & $0.943\pm 0.002$ & \hphantom{0}$5.71\pm 0.010$\\
$10^{\times 8}$ & $*$ & $*$ & $*$ & $*$\\
\bottomrule
\end{tabular}
\end{center}
\end{table*}
\vspace{-0.5cm}
{\footnotesize \hspace{0.8cm}The $*$ symbol indicates that the algorithm was unable to run for the given tensor size.}

\begin{table*}[h]
\caption{Results for nonnegative tensors with size $10^{\times 6}$ and increasing $n$ samples} 
\label{table:inc_ss}
\begin{center}
\begin{tabular}{ccccc}
\toprule
\textbf{Sample Percent} & \multicolumn{2}{c}{\textbf{BCG with Weak Oracle}}  &\multicolumn{2}{c}{\textbf{Alternating Least Squares}} \\
& NMSE & Time (s) & NMSE & Time (s) \\
\midrule
0.01 & $0.989\pm 0.002$ & $0.348 \pm 0.016$ & $1.000\pm 0.000$ & $0.067\pm 0.001$\\
0.1 & $0.457\pm 0.023$ & $62.4 \pm 4.79$ & $1.000\pm 0.000$ & $0.105\pm 0.001$\\
1 & $0.007\pm 0.000$ & $60.4 \pm 1.32$ & $0.389\pm 0.023$ & \hphantom{0}$1.82\pm 0.098$\\
10 ($n= 100,000$) & $0.005\pm 0.000$ & $413.7\pm 11.1\hphantom{0}$ & $0.046\pm 0.005$ & $15.03\pm 0.401$\\
\midrule
\textbf{Sample Percent} & \multicolumn{2}{c}{\textbf{Simple Low Rank TC (SiLRTC)}}  &\multicolumn{2}{c}{\textbf{Trace Norm \& ADMM (TNCP)}} \\
& NMSE & Time (s) & NMSE & Time (s) \\
\midrule0.01 & $1.000 \pm 0.000$ & $1.20 \pm 0.055$ & $0.907\pm 0.004$ & $0.261\pm 0.001$\\
0.1 & $1.000\pm 0.000$ & $1.24 \pm 0.041$ & $0.893\pm 0.003$ & $0.260\pm 0.001$\\
1 & $0.990\pm 0.000$ & $1.67 \pm 0.053$ & $0.880\pm 0.003$ & $0.266\pm 0.001$\\
10 ($n= 100,000$) & $0.862\pm 0.001$ & $4.23\pm 0.141$ & $0.800\pm 0.003$ & $0.264\pm 0.001$\\
\bottomrule
\end{tabular}
\end{center}
\end{table*}

\begin{table*}[h]
\caption{Results for nonnegative tensors with size $10^{\times 7}$ and increasing $n$ samples} \label{table:inc_ss_3}
\begin{center}
\begin{tabular}{ccccc}
\toprule
\textbf{Sample Percent} & \multicolumn{2}{c}{\textbf{BCG with Weak Oracle}}  &\multicolumn{2}{c}{\textbf{Alternating Least Squares}} \\
& NMSE & Time (s) & NMSE & Time (s) \\
\midrule
0.01 & $0.873\pm 0.016$ & \hphantom{0}$48.4\pm 3.42 $ & $1.000 \pm 0.000$ & $0.561 \pm 0.003$\\
0.1  & $0.029\pm 0.004$ & $\hphantom{0.}145 \pm 36.6$ & $0.877\pm 0.023$ & \hphantom{0}$1.29\pm 0.043$\\
1  & $0.011\pm 0.001$ & $\hphantom{0.}522 \pm 11.7$ & $0.220\pm 0.016$ & \hphantom{0}$19.2\pm 0.603$\\
10 ($n = 1,000,000$) & $0.013\pm 0.001$ & $5623 \pm 381$ & $0.052\pm 0.007$ & \hphantom{0.}$189 \pm 4.89\hphantom{0}$\\
\midrule
\textbf{Sample Percent} & \multicolumn{2}{c}{\textbf{Simple Low Rank TC (SiLRTC)}}  &\multicolumn{2}{c}{\textbf{Trace Norm \& ADMM (TNCP)}} \\
& NMSE & Time (s) & NMSE & Time (s) \\
\midrule
0.01 & $1.000\pm 0.000$ & $48.3\pm 7.95 $ & $0.950 \pm 0.002$ & $6.41 \pm 0.152$\\
0.1  & $1.000\pm 0.000$ & $42.4 \pm 1.46$ & $0.938\pm 0.002$ & $5.55\pm 0.007$\\
1  & $0.989\pm 0.000$ & $53.9 \pm 1.96$ & $0.933\pm 0.002$ & $5.57\pm 0.007$\\
10 ($n = 1,000,000$) & $0.863\pm 0.001$ & $\hphantom{.}106 \pm 3.44$ & $0.853\pm 0.002$ & $5.65 \pm 0.013$\\
\bottomrule
\end{tabular}
\end{center}
\end{table*}

\end{document}